\newcommand{\R}{\mathbb{R}}
\newcommand{\bb}{\textbf{b}}
\newcommand{\bW}{\textbf{W}}
\newcommand{\bx}{\bar{x}}
\newcommand{\bz}{\bar{z}}
\newcommand{\pr}{\mathbf{Prob}}
\newcommand{\T}{\intercal}
\newtheorem{thm}{Theorem}
\newtheorem{proposition}[thm]{Proposition}
\begin{document}

\title{Multi-layer Perceptron Trainability Explained via Variability}

\author{Yueyao Yu,
Yin Zhang
\thanks{This paragraph of the first footnote will contain the date on which you submitted your paper for review. It will also contain support information, including sponsor and financial support acknowledgment. For example, ``This work was supported in part by the U.S. Department of Commerce under Grant BS123456.'' }
\thanks{Y. Yu is with the School of Science and
	Engineering, The Chinese University of Hong Kong, Shenzhen, 518172,
	China  and with Shenzhen
	Research Institute of Big Data, China (e-mail: yueyaoyu@link.cuhk.edu.cn).}
\thanks{Y. Zhang is with the School of Data Science, The Chinese University of HongKong, ShenZhen, 518172,
	China (e-mail: yinzhang@cuhk.edu.cn).}

% \thanks{This paragraph will include the Associate Editor who handled your paper.}
}

% \markboth{Journal of IEEE Transactions on Artificial Intelligence, Vol. 00, No. 0, Month 2020}
% {Journals of IEEE Transactions on Artificial Intelligence}

\maketitle

\begin{abstract}
Despite the tremendous successes of deep neural networks (DNNs) in various applications, many fundamental aspects of deep learning remain incompletely understood, including DNN trainability. In a trainability study, one aims to discern what makes one DNN model easier to train than another under comparable conditions. In particular, our study focuses on multi-layer perceptron (MLP) models equipped with the same number of parameters. We introduce a new notion called variability to help explain the benefits of deep learning and the difficulties in training very deep MLPs. Simply put, variability of a neural network represents the richness of landscape patterns in the data space with respect to well-scaled random weights. We empirically show that variability is positively correlated to the number of activations and negatively correlated to a phenomenon called “Collapse to Constant”, which is related but not identical to the well-known vanishing gradient phenomenon. Experiments on a small stylized model problem confirm that variability can indeed accurately predict MLP trainability. In addition, we demonstrate that, as an activation function in MLP models, the absolute value function can offer better variability than the popular ReLU function can.
\end{abstract}

\begin{IEEEImpStatement}
The use of deep neural networks (DNNs) has been driving the recent advances in artificial intelligence, though our understanding of DNNs remains deficient. In this work we study the trainability issue to understand what makes DNN models difficult or easy to train when the number of model parameters is fixed. We have identified and empirically studied a property called variability that demonstrably affects the trainability of a primary class of DNNs (called multilayer perceptrons). Our results provide a new angle to study the issue of DNN trainability and can potentially help design more efficient DNNs that maintain a high level of performance without demanding excessive amounts of computing powers and energy.
\end{IEEEImpStatement}

%In this paper, we investigate the issue of DNN trainability and focus on the role of the property called variability in affecting the trainability of a primary class of DNNs, the multilayer perceptrons. Our findings shed new light on DNN trainability and have the potential to guide the design of more efficient DNNs that maintain high performance while minimizing computational and energy demands.

\begin{IEEEkeywords}
Deep neural network, multi-layer perceptron, trainability, variability, collapse to constant, absolute-value activation
\end{IEEEkeywords}

\section{Introduction}

\IEEEPARstart{D}{eep} neural networks (DNNs)  have achieved remarkable success in various fields, but many fundamental issues are still not fully understood, including the trainability of DNNs.  Recently, researchers have explored the trainability of DNNs in the infinite-width limit using mean-field theory or neural tangent kernel methods, e.g.~\cite{schoenholz2017deep, jacot2018neural, xiao2020disentangling}. In this paper, we propose a new approach to studying the trainability of MLP models under the setting where the total number of model parameters is fixed. 

The purpose of this work is to gain valuable insights into behaviors of DNNs.  Our  contributions are mainly conceptual, consisting of the following aspects.

\begin{enumerate}
	\item 
	We introduce the concept of variability and investigate two different measurements, providing a novel perspective on understanding the advantages of deep learning and the difficulties associated with training. Specifically, for MLPs with a fixed number of parameters, we show that variability initially rises and then falls as MLP depth grows.
	
	\item
	We show that the initial increase in variability coincides with the increase of the activation ratio, while the subsequent decrease is due to a phenomenon called Collapse to Constant (C2C) that is distinct from gradient vanishing phenomenon. We explain the similarities and differences of the two phenomena through their characterization matrices.

	\item 
	Experiments on a stylized model problem provide strong evidence suggesting that variability is a critical indicator for  training performance on deep MLPs.
	
	\item
	We show that the absolute-value function (ABS), when used as an activation function in MLPs, generally provides higher variability than the popular ReLU function. Indeed, experiments confirm that ABS generally yields better training results than ReLU does.
\end{enumerate}

\section{MLP: notations and settings}

We first introduce notations and neural network settings used throughout the article.
\subsection{Notations} 
We consider MLP models comprising of an input layer, an output layer, and $L+1$ hidden layers for $L \ge 0$. 
It is constructed from $L$ affine maps represented by a sequence of weight matrices $\{W_k\}$ and bias vectors $\{b_k\}$ of compatible sizes for $k=1,\cdots,L$.  We denote the collections of such weight matrices and bias vectors, respectively, by
\begin{equation*}
\bW = \{W_1,\cdots,W_L\} ~~\mbox{ and }~~ \bb = \{b_1,\cdots,b_L\}.
\end{equation*}
For ease of discussions, we will tacitly assume that all weight matrices $W_i \in \R^{d\times d}$ and all bias vectors $b_i \in \R^d$.  This assumption will have no substantive impact on our conclusions.

At each hidden layer $k \in \{1,\cdots,L\}$, we define
%a \emph{hidden-layer function} $\psi_k(\cdot)$ is applied to the input of the layer, where
\begin{equation}\label{def:Gi}
%\psi_k(\cdot) \equiv 
\psi_k(\cdot,W_k,b_k) := \phi(W_k(\cdot)+b_k):  \R^d \rightarrow\R^d,
\end{equation}
which is the composition of an activation function $\phi(\cdot)$ with the affine function defined by the weight-bias pair $(W_k,b_k)$.  Normally, $\phi$ is a scalar function applied component-wise to vectors. In this paper, we will use three activation functions: Sigmoid for $\phi(t) = 1/(1+e^{-t})$, ReLU for $\phi(t) = \max(0,t)$ and ABS for $\phi(t)=|t|$.  

For convenience, we often drop the dependence of $\psi_k$ on the parameter pair $(W_k,b_k)$ whenever no confusion arises.

We define an {MLP function} to be
%aggregates all $L$ functions in \eqref{def:Gi} to form
\begin{equation}\label{def:Fk}
F_L(\cdot) \equiv F_L(\cdot,\bW,\bb) := (\psi_L\circ\cdots\circ \psi_1)(\cdot): \R^d \rightarrow\R^d,
%\equiv \psi_L(\psi_{k-1}(\cdots \psi_2(\psi_1(x))))),
\end{equation}
which is the composition of $\psi_1$ to $\psi_L$ and parameterized by the aggregated pair $(\bW,\bb)$. For any given parameter pair, the network maps an input $x$ to an output $F_L(x,\bW,\bb)$ that can be computed through the forward propagation: set $s_0 = x$,
\begin{equation}\label{def:recursion}
	z_k = W_ks_{k-1}+b_k, s_k = \phi(z_k), \;\; k = 1,\cdots,L.
\end{equation}
Then, $F_L(x,\bW,\bb)=s_L$ at the end. 

In our notation, subscripts usually are reserved as indices of hidden layers. On the other hand, we use $[v]_i$ to denote the $i$-th element of a vector $v$, and similarly for matrix elements.

\subsection{A normalized MLP setting}
We will examine certain properties of the MLP function $F_L(x,\bW,\bb)$ as the number of layers $L$ increases while the number of total parameters is kept a constant $N_w$. Since the total number of parameters of a neural network is a dominant cost factor in the training and deployment of the network, it is appropriate to study  architectural issues of neural networks under a normalized setting with fixed costs.

To facilitate subsequent experiments in this paper, we add an input layer and an output layer, both of dimension 2, to the $L+1$ hidden layers.  For convenience, we continue to use $F_L(x)$ to denote the extended network which now has become a map from $\R^2$ to $\R^2$. In this case, the total number of parameter is
\begin{equation}\label{eq:Nw}
	N_w = Ld^2+(L+5)d+2.
\end{equation}
Figure~\ref{fig:mlp} shows an MLP example with $L=1$ and $d=10$, leading to $N_w=162$.
\begin{figure}[htb]
	\begin{center}
		\includegraphics[width=.4\textwidth,trim=120 190 110 150,clip]{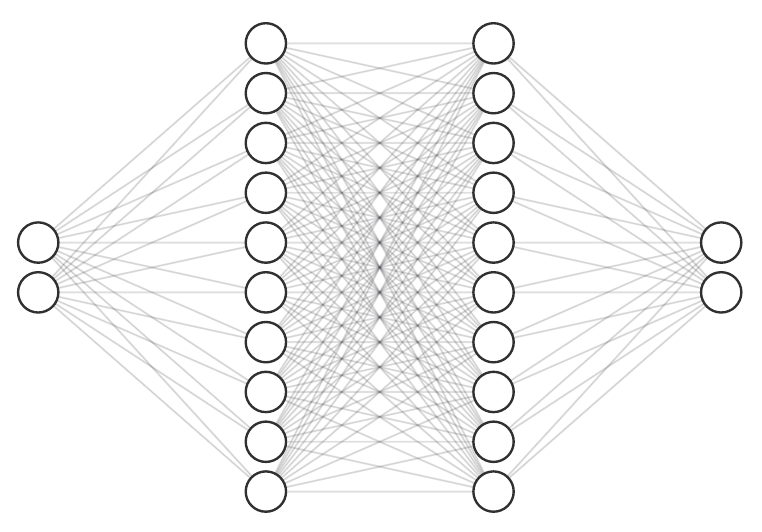}
		
		\caption{An example of $F_L(\cdot)$ where $L=1$ and $d=10$.}
		\label{fig:mlp}
	\end{center}
	%	\vspace{-.5cm}
\end{figure}

\subsubsection{Width $d$ v.s. Depth  $L$}
Solving \eqref{eq:Nw}, we obtain 
\begin{equation}\label{eq:d}
	d = {\left(\sqrt{(L+5)^2+4L(N_w-2)}-(L+5)\right)}/{2L},
\end{equation}
which of course is not necessarily an integer. We will make small adjustments to the $d$-value (by adding or deleting one or two nodes from some hidden layers) to keep the total number of model parameters as close to a prescribed constant as possible.

\subsubsection{Initializing and Scaling}
\label{scale}

It is well-known that the trainability of a model is heavily influenced by its weight initialization strategy. In this study, we adopt the commonly used strategy: to initialize model parameters as random numbers from the standard normal distribution and then scale them.  Specifically, we employ the Xavier initialization for Sigmoid and ABS functions and the Kaiming initialization for ReLU~\cite{glorot2010understanding,he2015delving}. 

From now on, we will assume that the parameters $\bW$ and $\bb$ in the MLP function $F_L(x,\bW,\bb)$ are always initialized and scaled by the above standard initialization schemes.  

%that are selected from normal distributions 
%\begin{equation}\label{W,b-Normal}
%	[W_k]_{ij}, \iid \cN(0,\sigma), \;\;\; [b_k]_i \iid \cN(0,\sqrt{1/d}),
%\end{equation}
%where the variance $\sigma$ for weights is set according to the activation function in use.   

\section{What is variability}
\begin{figure*}[th] 
	\centering  
	%	\captionsetup[subfloat]{labelsep=none,format=plain,labelformat=empty}
	
	\subfloat[Sigmoid~($L=1$)]{
		\begin{minipage}[b]{0.3\textwidth}  
			\includegraphics[width=0.48\textwidth,trim=60 50 50 50,clip]{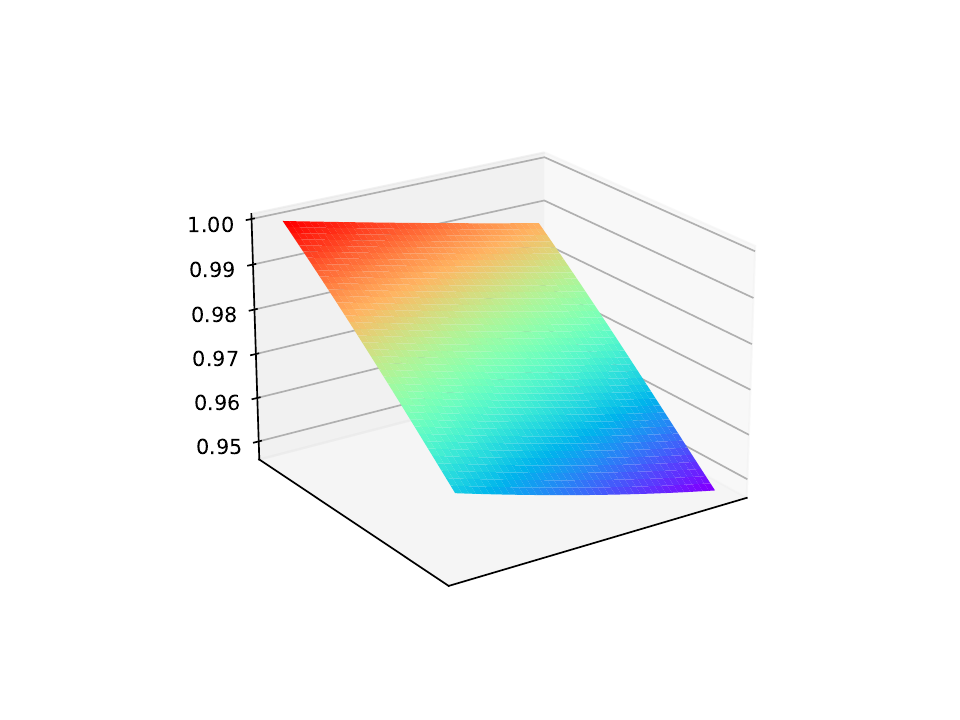}    \includegraphics[width=0.48\textwidth,trim=60 50 50 50,clip]{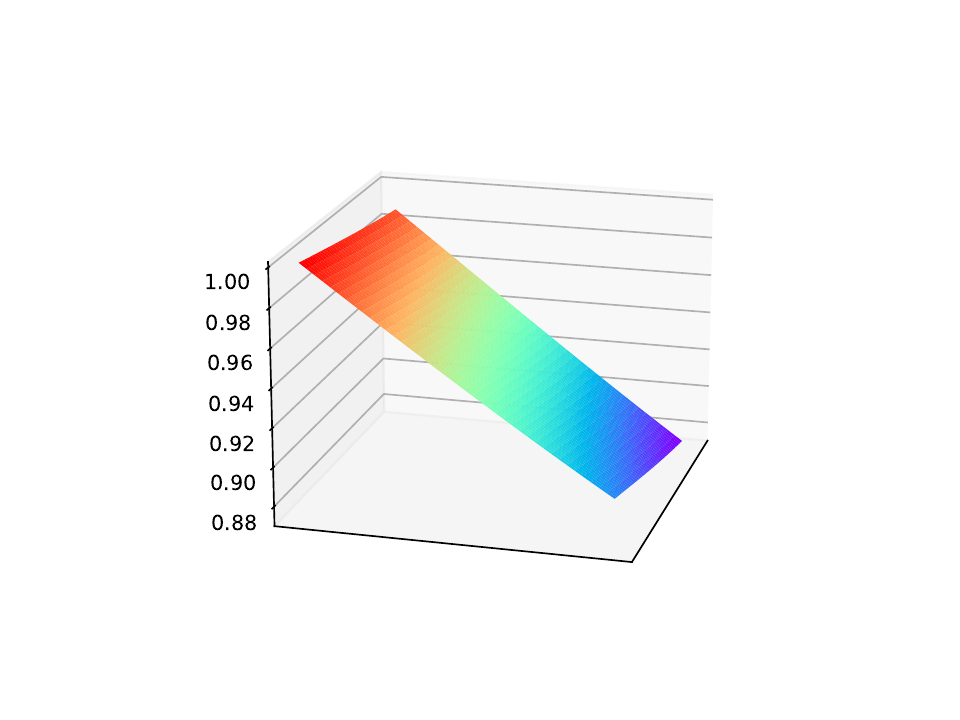} \\
			\includegraphics[width=0.48\textwidth,trim=60 50 50 50,clip]{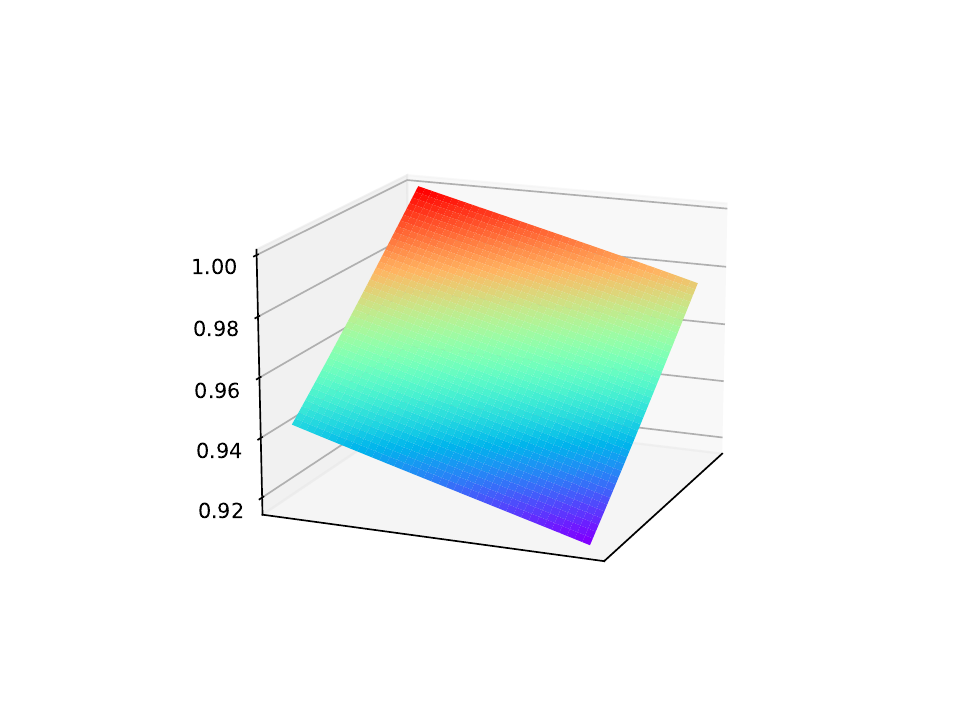}    \includegraphics[width=0.48\textwidth,trim=60 50 50 50,clip]{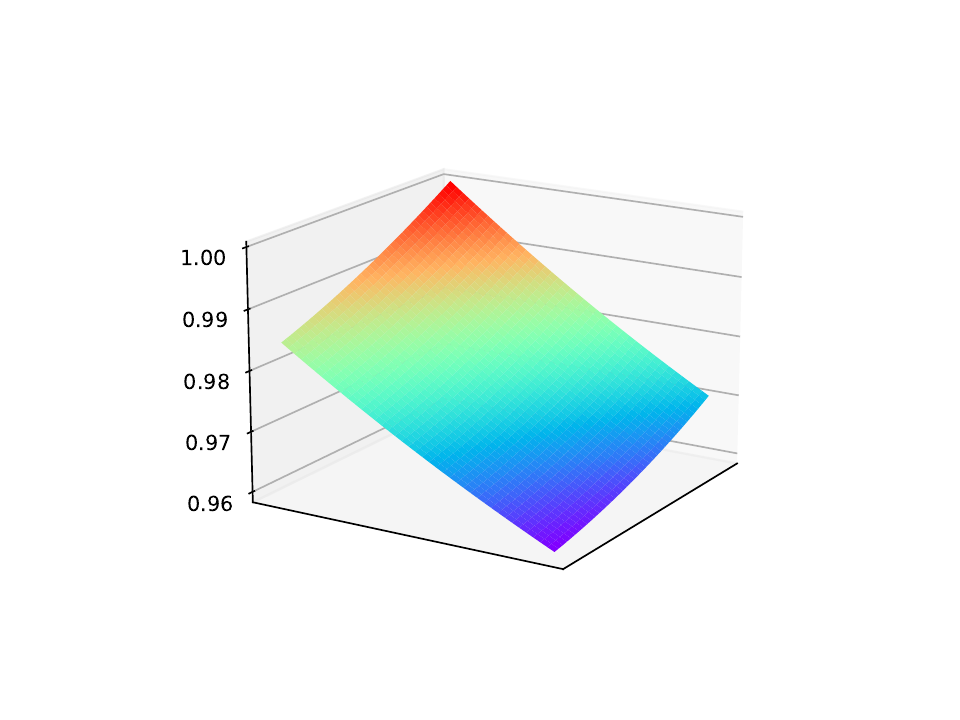} \\ 
	\end{minipage}   }  
	\subfloat[Sigmoid~($L=10$)]{  
		\begin{minipage}[b]{0.3\textwidth}  
			\includegraphics[width=0.48\textwidth,trim=60 50 50 50,clip]{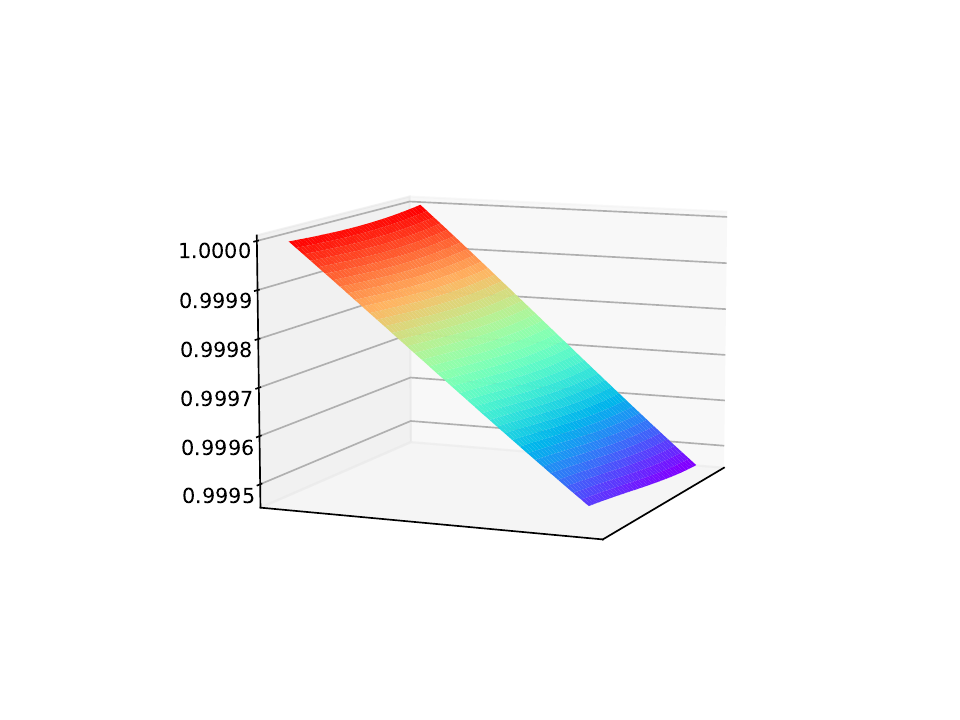}    \includegraphics[width=0.48\textwidth,trim=60 50 50 50,clip]{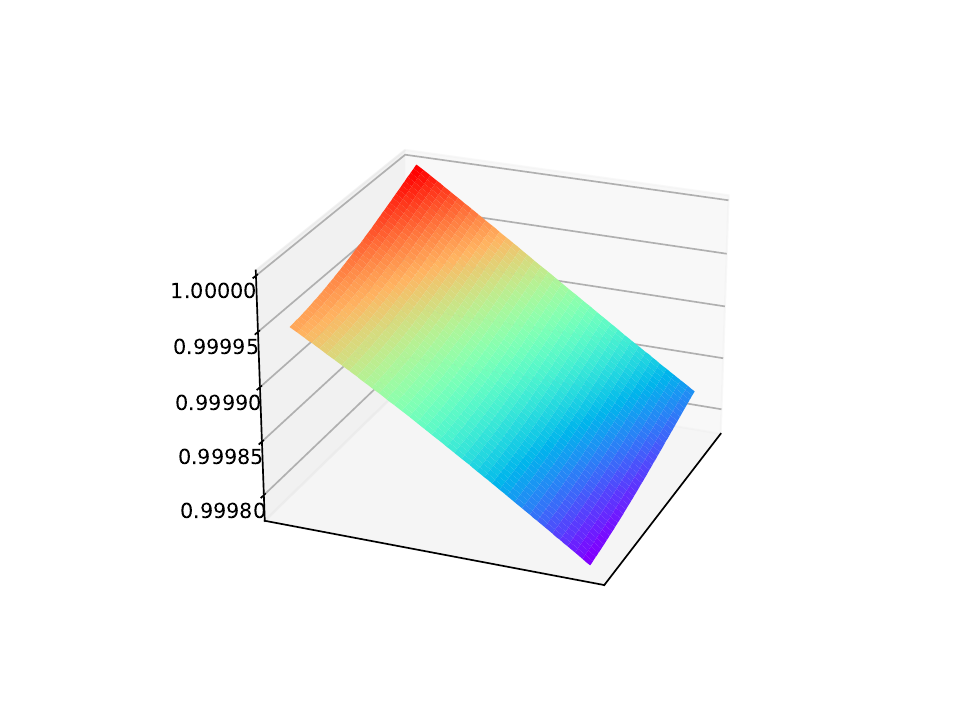} \\
			\includegraphics[width=0.48\textwidth,trim=60 50 50 50,clip]{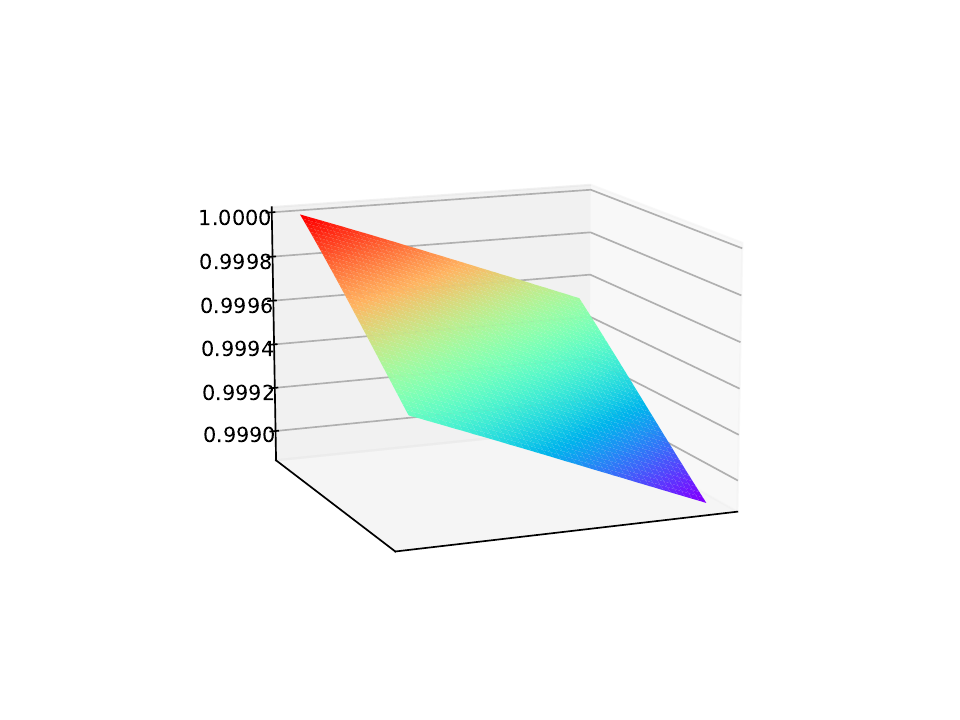}    \includegraphics[width=0.48\textwidth,trim=60 50 50 50,clip]{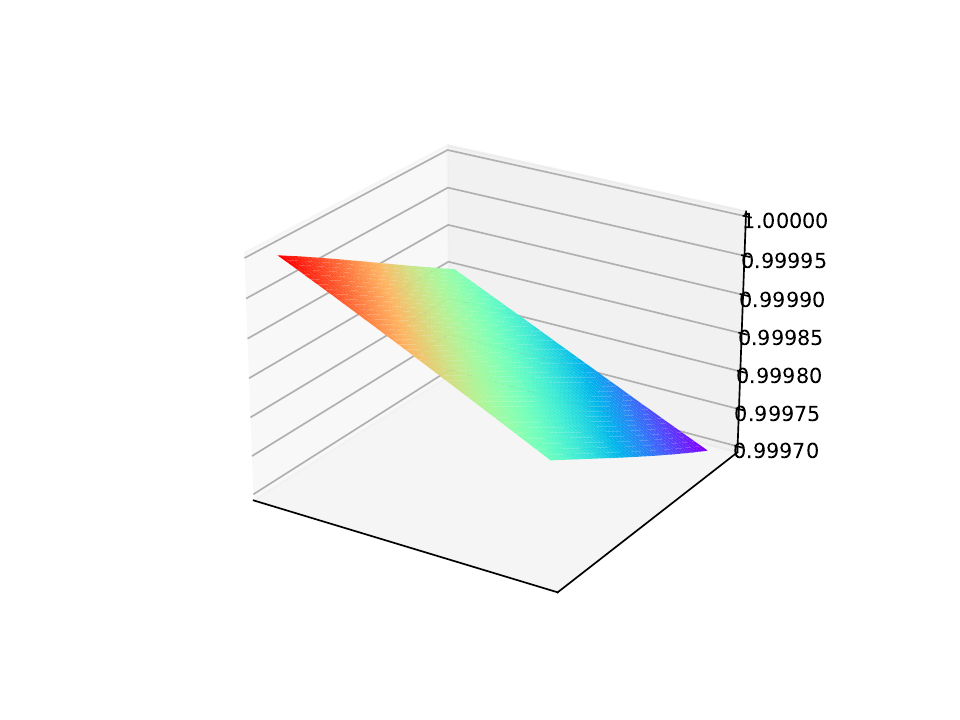} \\ 
	\end{minipage}   }  
	\subfloat[Sigmoid~($L=20$)]{  
		\begin{minipage}[b]{0.3\textwidth}  
			\includegraphics[width=0.46\textwidth,trim=60 50 50 50,clip]{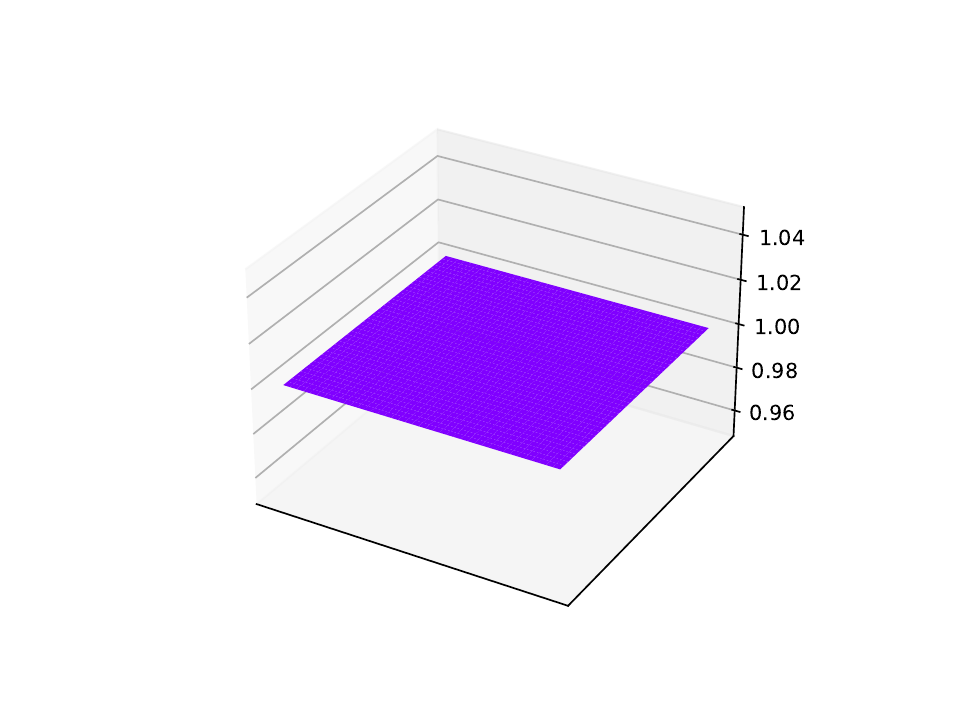}    \includegraphics[width=0.46\textwidth,trim=60 50 50 50,clip]{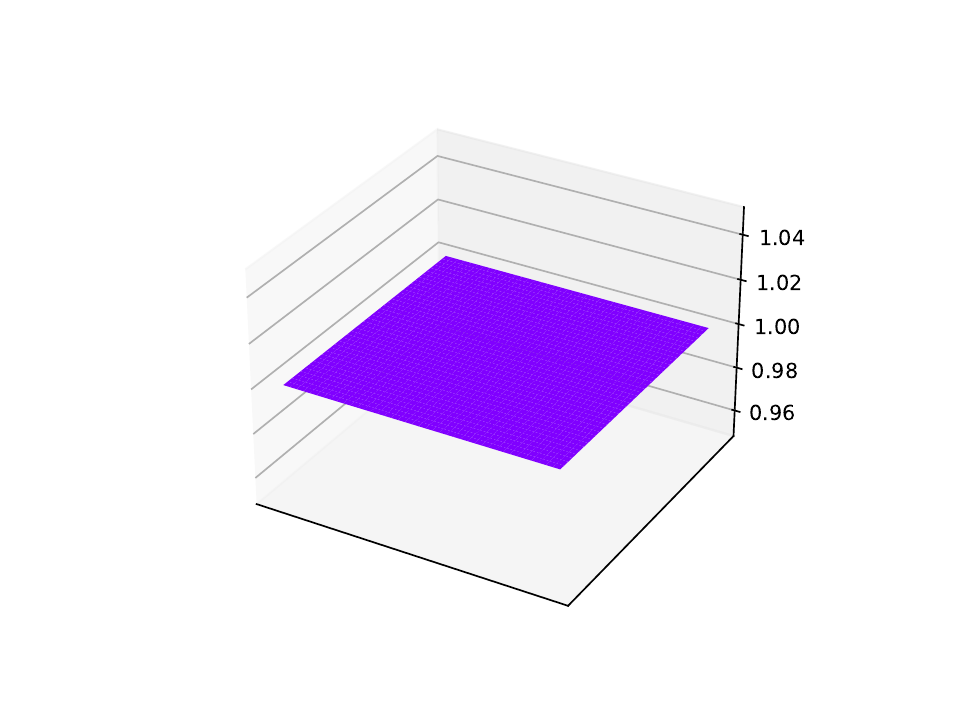} \\
			\includegraphics[width=0.46\textwidth,trim=60 50 50 50,clip]{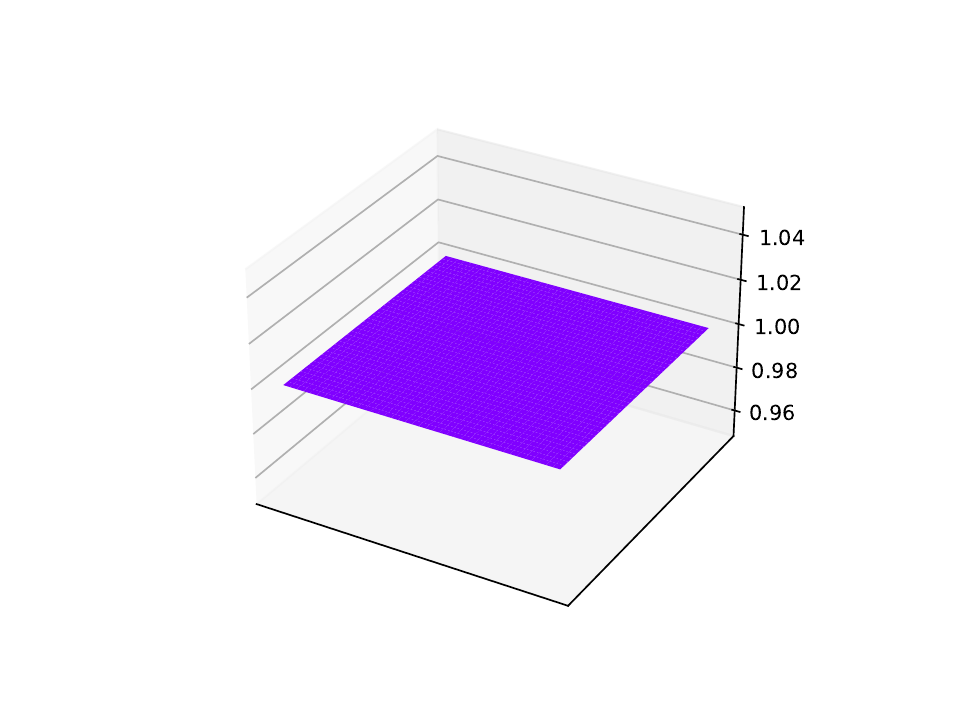}    \includegraphics[width=0.46\textwidth,trim=60 50 50 50,clip]{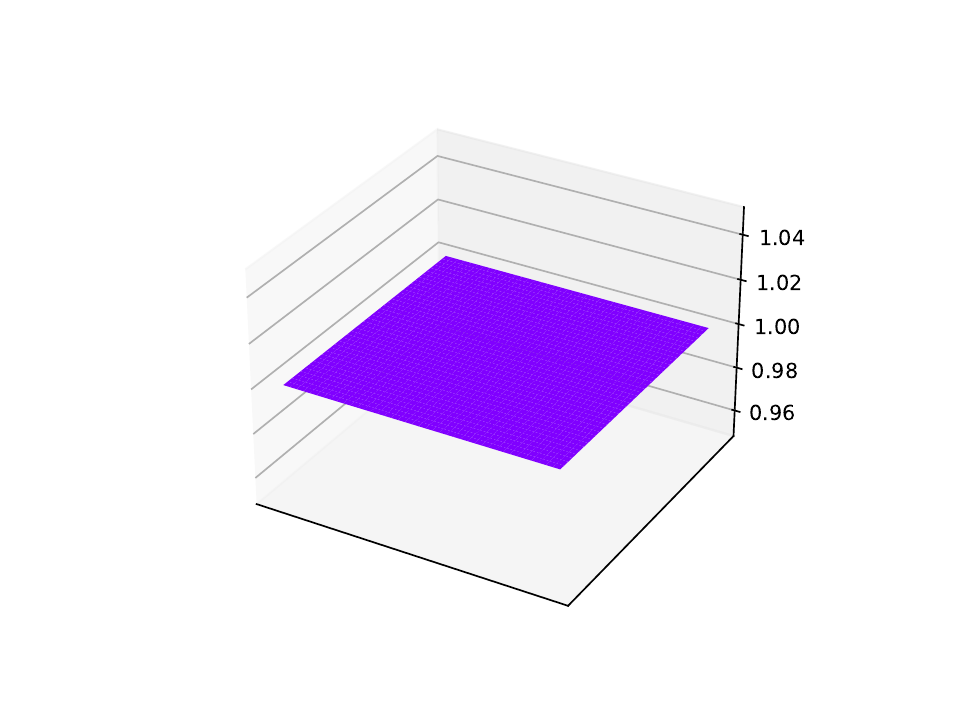} \\ 
	\end{minipage}   }  
	
	\subfloat[ReLU~($L=1$)]{ 
		\begin{minipage}[b]{0.3\textwidth}  
			\includegraphics[width=0.48\textwidth,trim=60 50 50 50,clip]{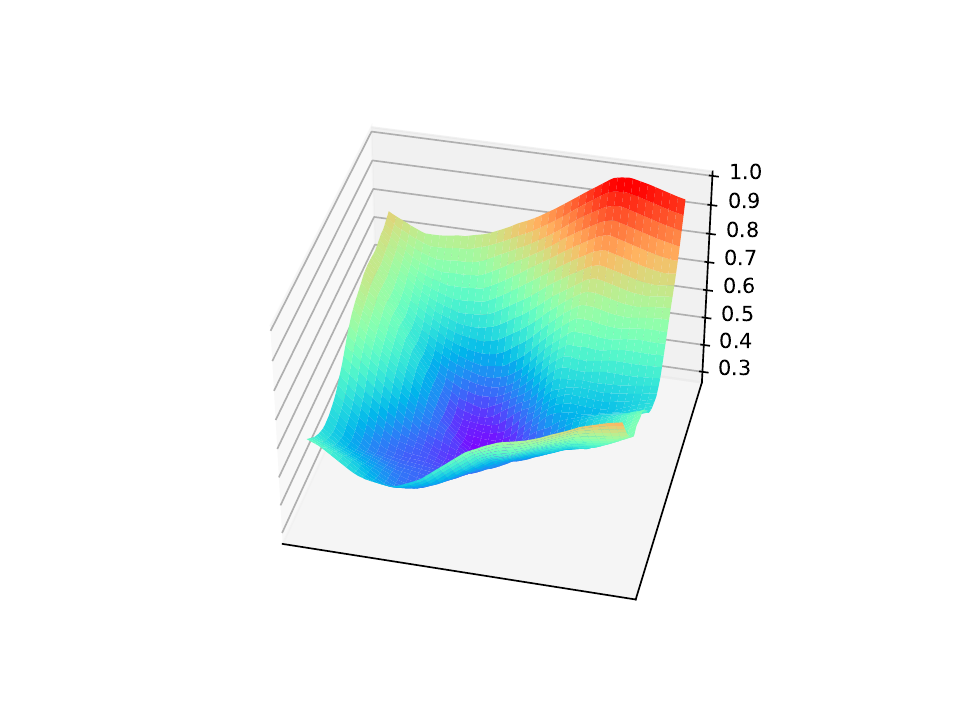}    
			\includegraphics[width=0.48\textwidth,trim=60 50 50 
			50,clip]{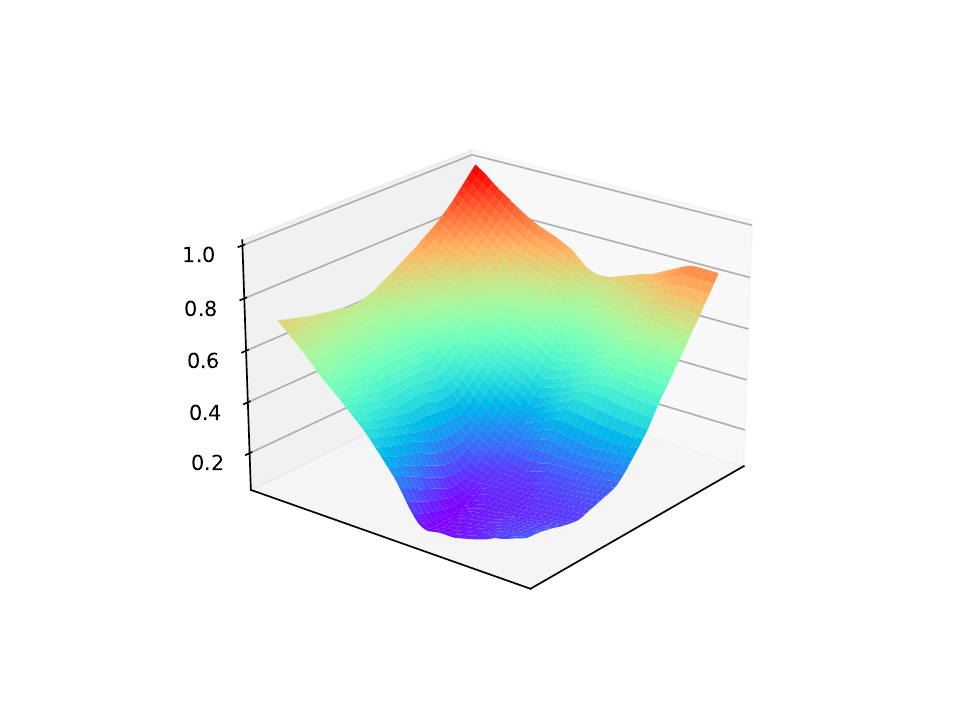} \\
			\includegraphics[width=0.48\textwidth,trim=60 50 50 50,clip]{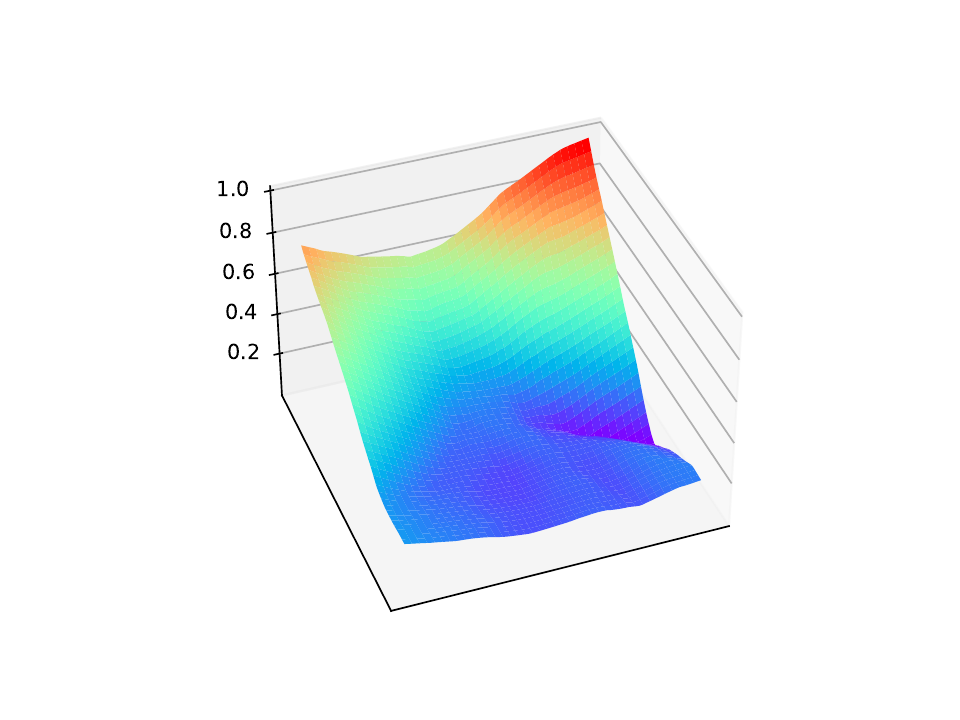}   
			\includegraphics[width=0.48\textwidth,trim=60 50 50 50,clip]{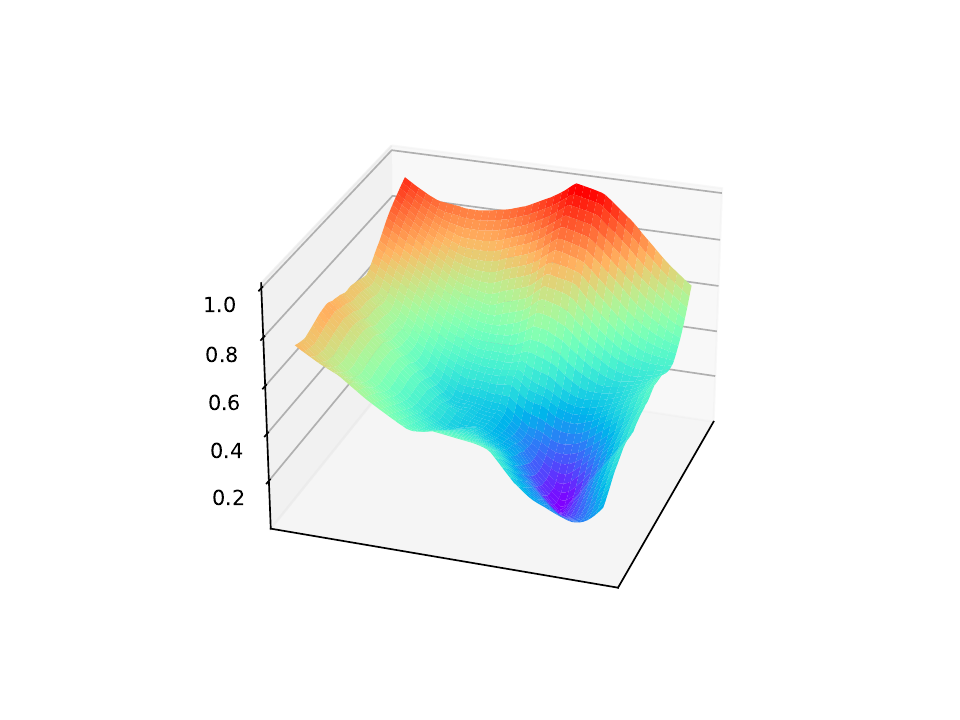} \\ 
	\end{minipage}   }  
	\subfloat[ReLU~($L=10$)]{  
		\begin{minipage}[b]{0.3\textwidth}  
			\includegraphics[width=0.48\textwidth,trim=60 50 50 50,clip]{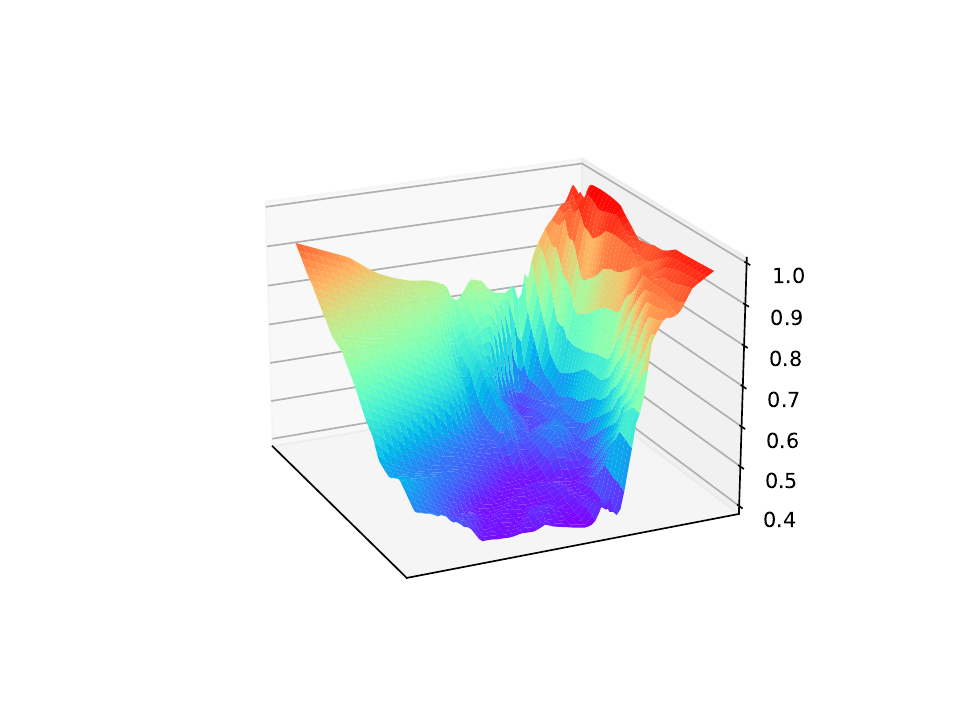}    
			\includegraphics[width=0.48\textwidth,trim=60 50 50 
			50,clip]{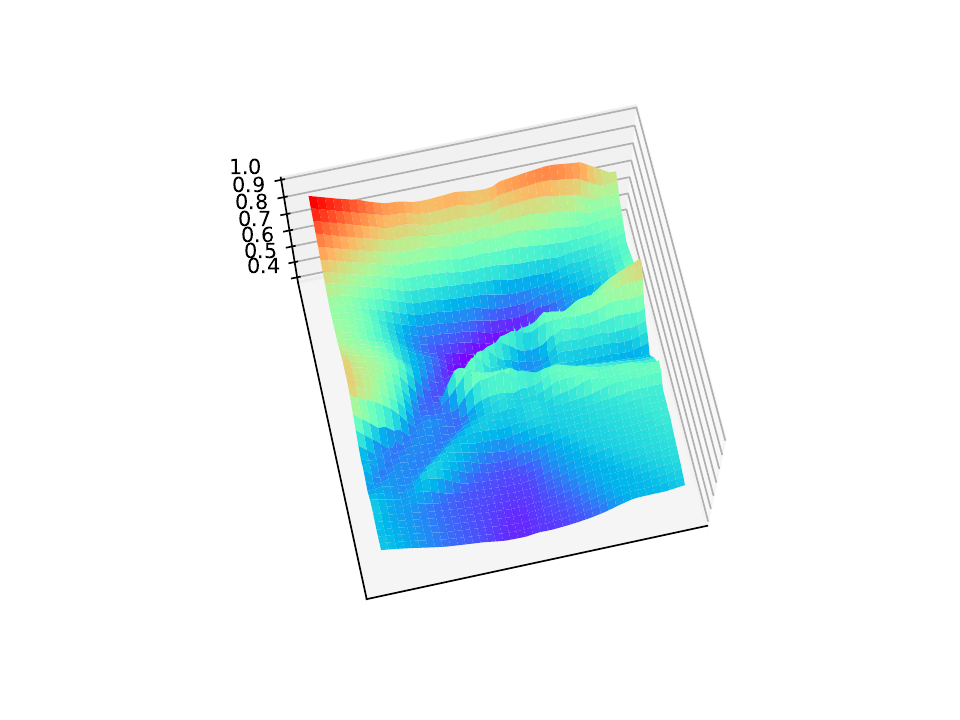} \\
			\includegraphics[width=0.48\textwidth,trim=60 50 50 50,clip]{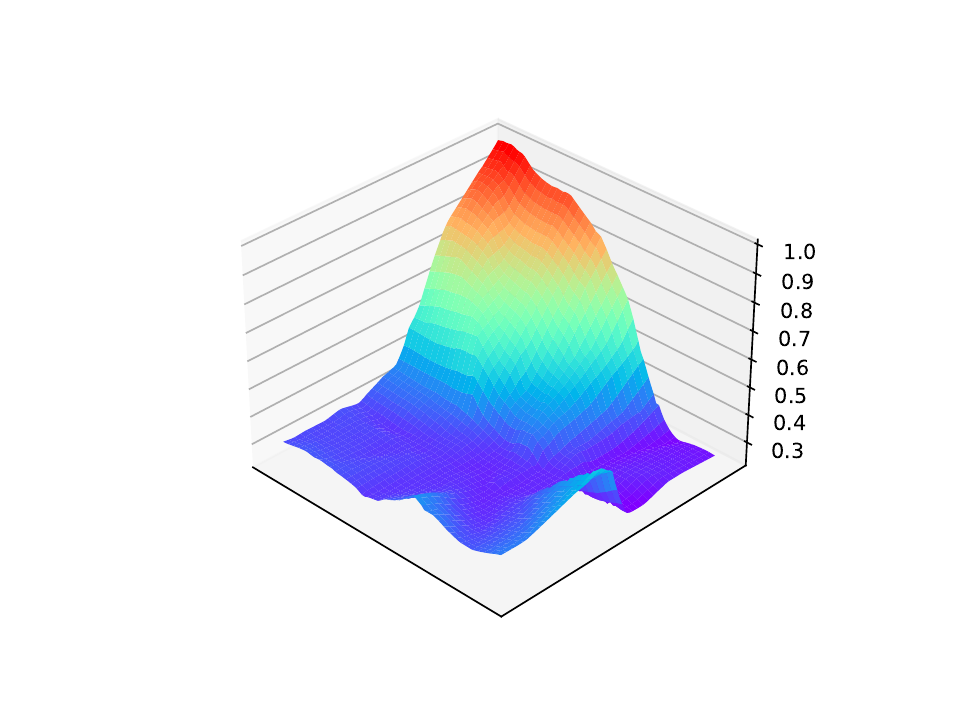}   
			\includegraphics[width=0.48\textwidth,trim=60 50 50 
			50,clip]{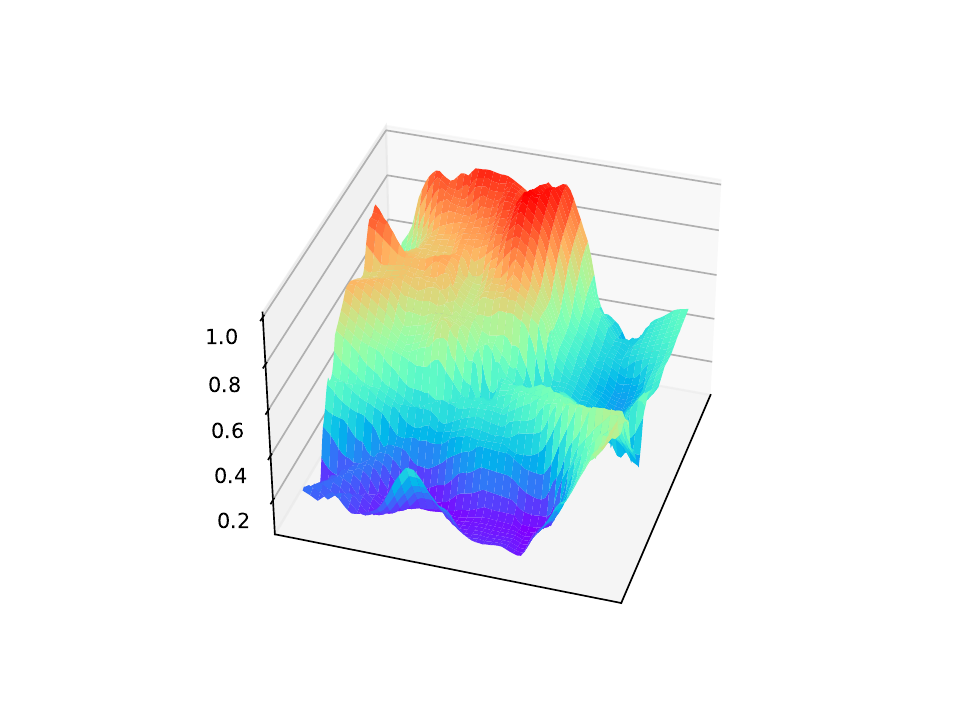} \\ 
	\end{minipage}   }   
	\subfloat[ReLU~($L=30$)]{  
		\begin{minipage}[b]{0.3\textwidth}  
			\includegraphics[width=0.48\textwidth,trim=60 50 50 
			50,clip]{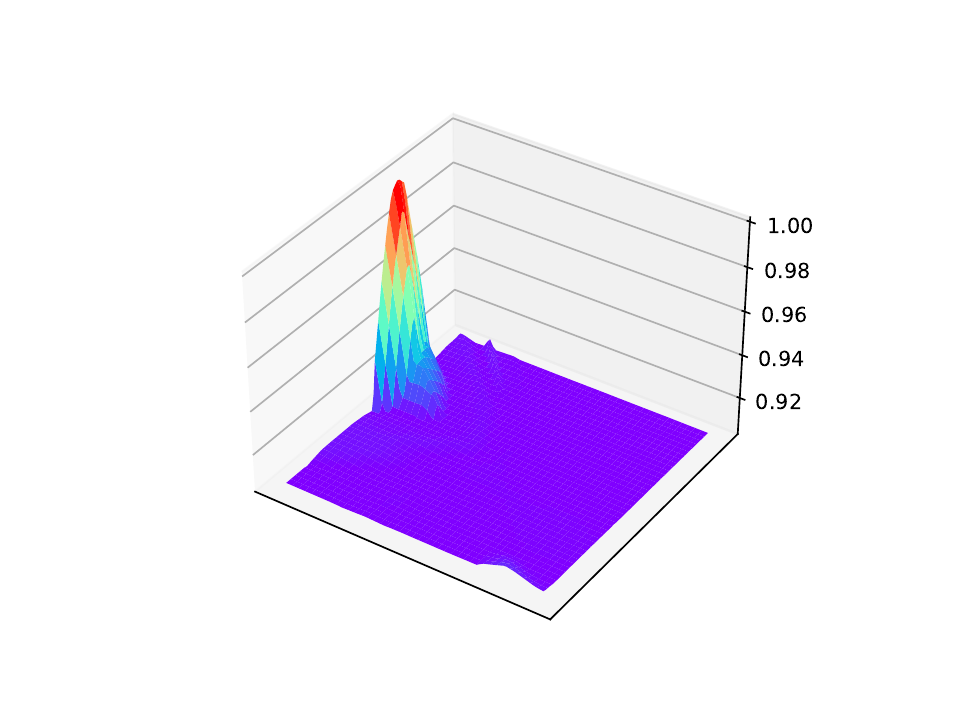}    
			\includegraphics[width=0.48\textwidth,trim=60 50 50 
			50,clip]{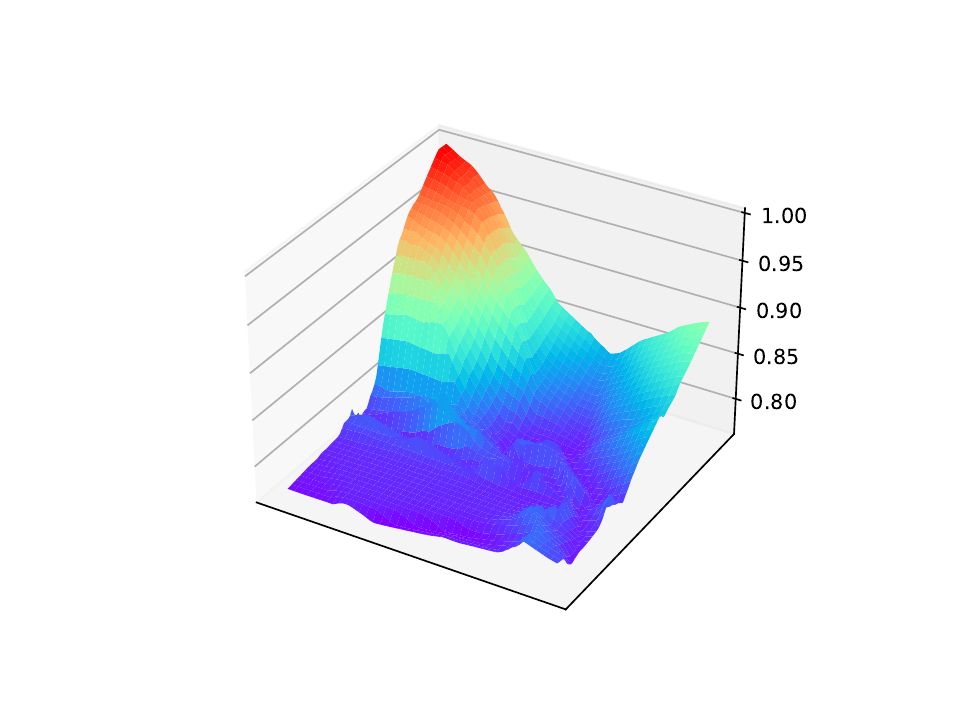} \\
			\includegraphics[width=0.48\textwidth,trim=60 50 50 
			50,clip]{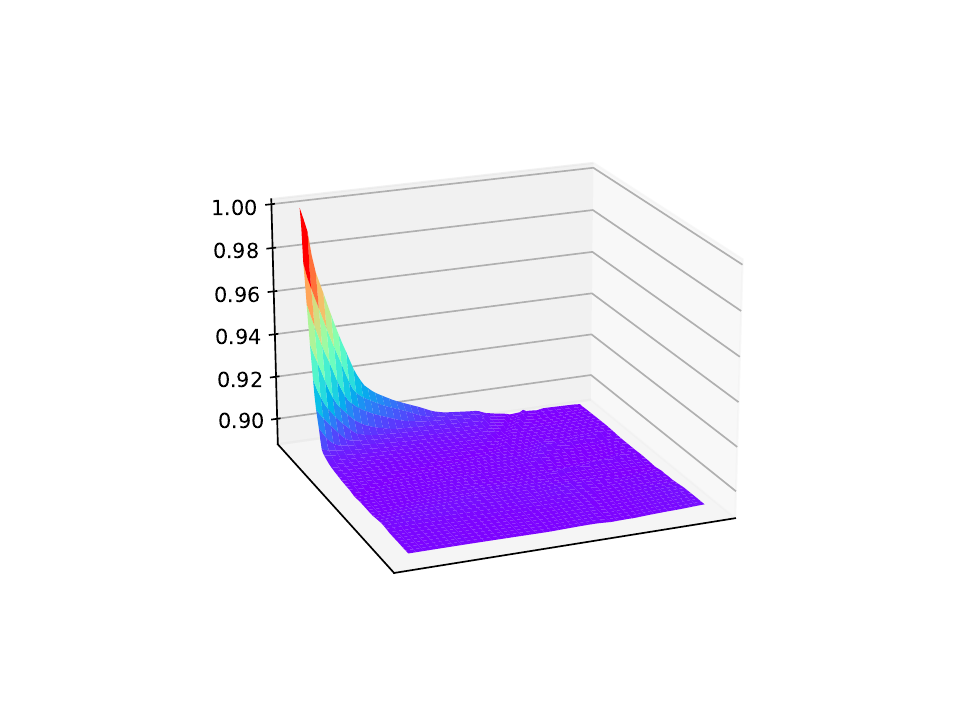}   
			\includegraphics[width=0.48\textwidth,trim=60 50 50 
			50,clip]{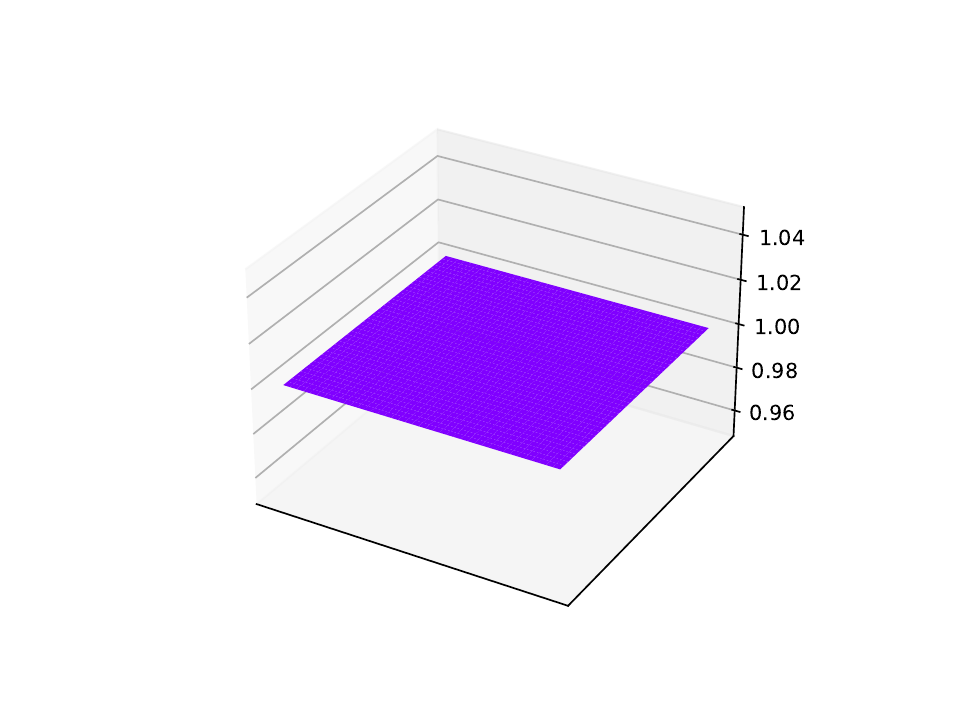} \\ 
	\end{minipage}   }   
	
	\subfloat[ABS~($L=1$)]{ 
		\begin{minipage}[b]{0.3\textwidth}  
			\includegraphics[width=0.48\textwidth,trim=60 50 50 50,clip]{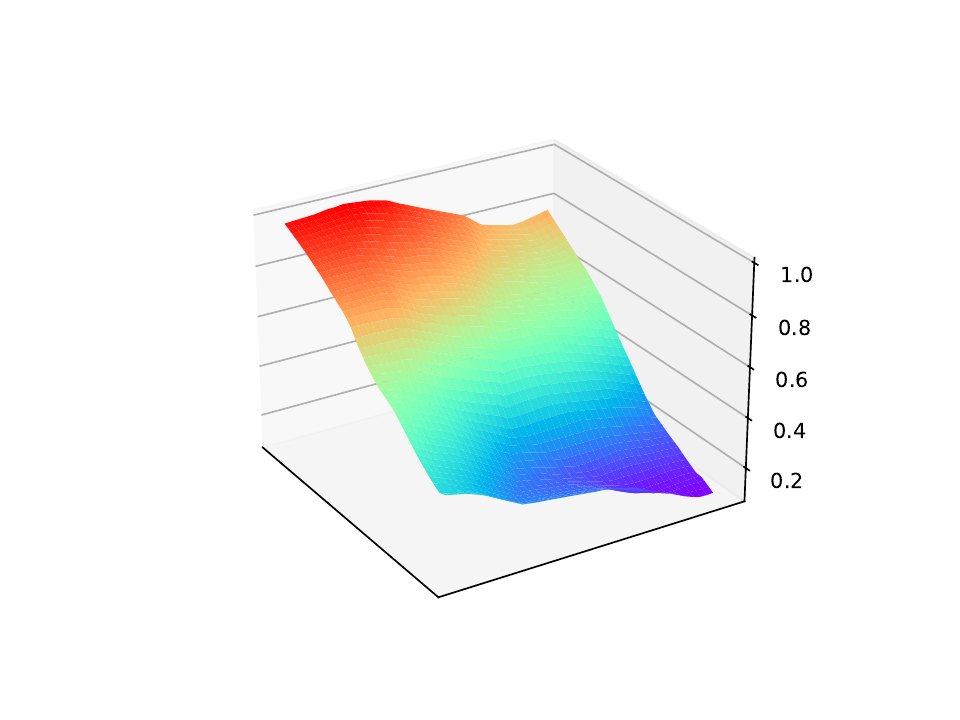}    
			\includegraphics[width=0.48\textwidth,trim=60 50 50 
			50,clip]{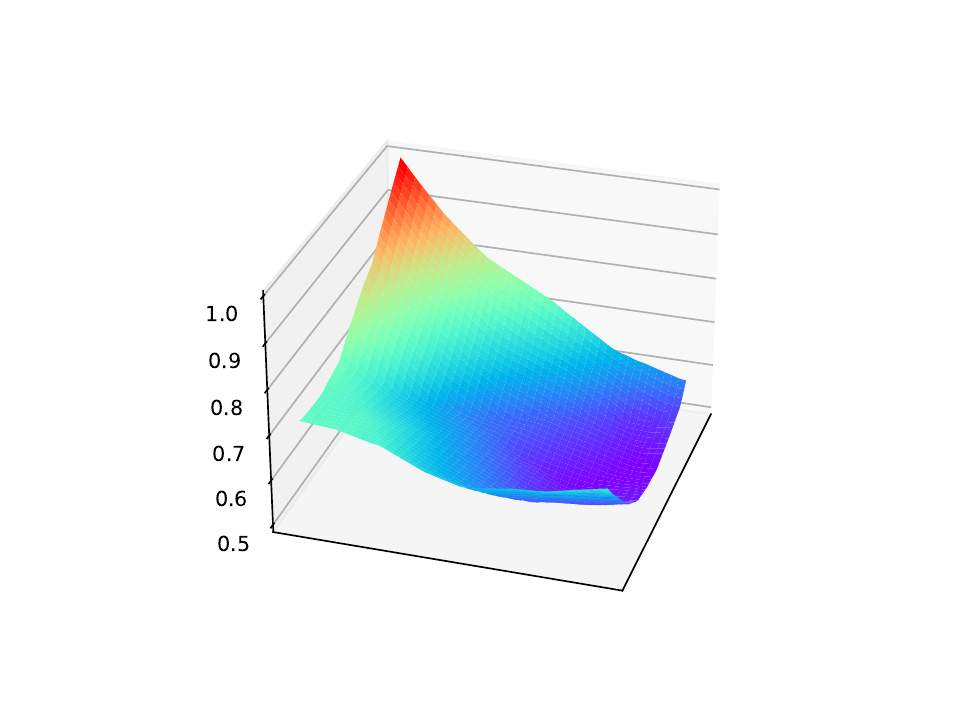} \\
			\includegraphics[width=0.48\textwidth,trim=60 50 50 50,clip]{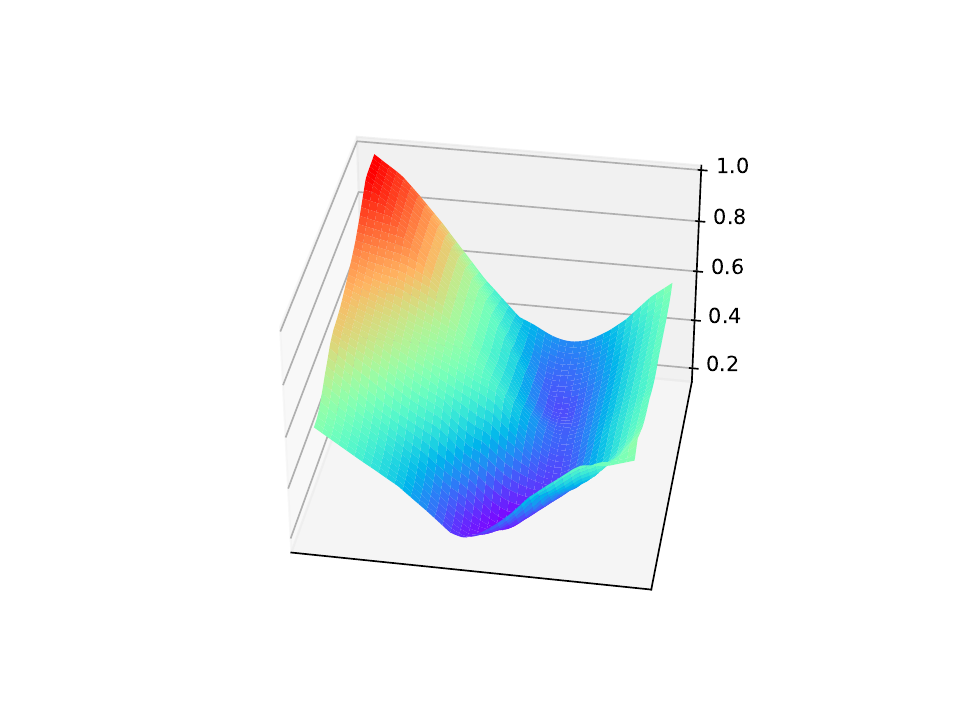}   
			\includegraphics[width=0.48\textwidth,trim=60 50 50 
			50,clip]{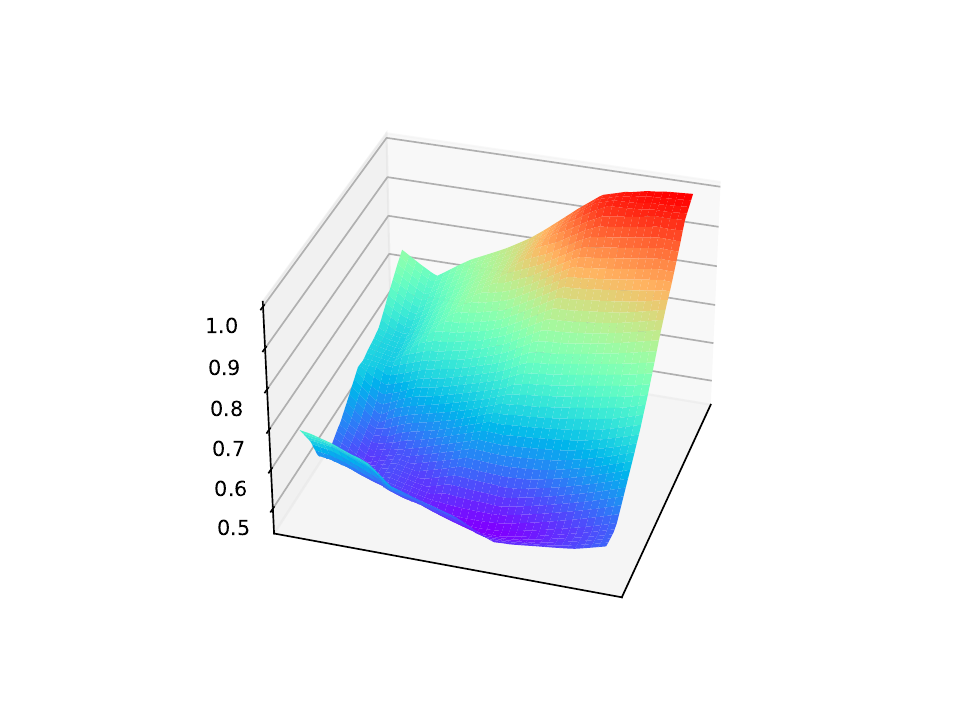} \\ 
	\end{minipage}   }  
	\subfloat[ABS~($L=10$)]{  
		\begin{minipage}[b]{0.3\textwidth}  
			\includegraphics[width=0.48\textwidth,trim=60 50 50 50,clip]{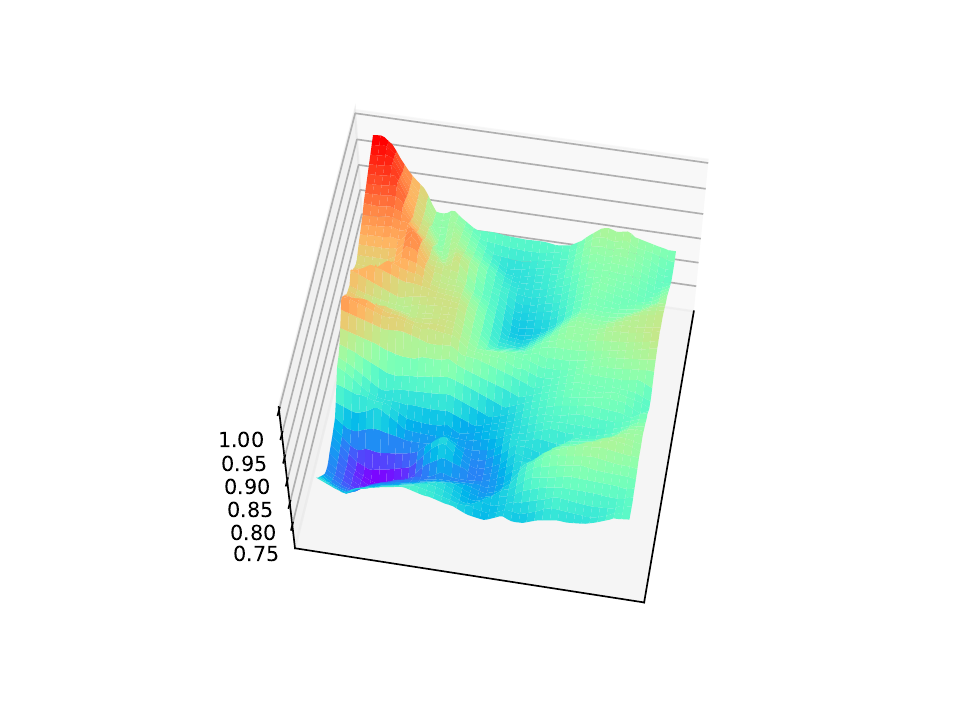}    
			\includegraphics[width=0.48\textwidth,trim=60 50 50 
			50,clip]{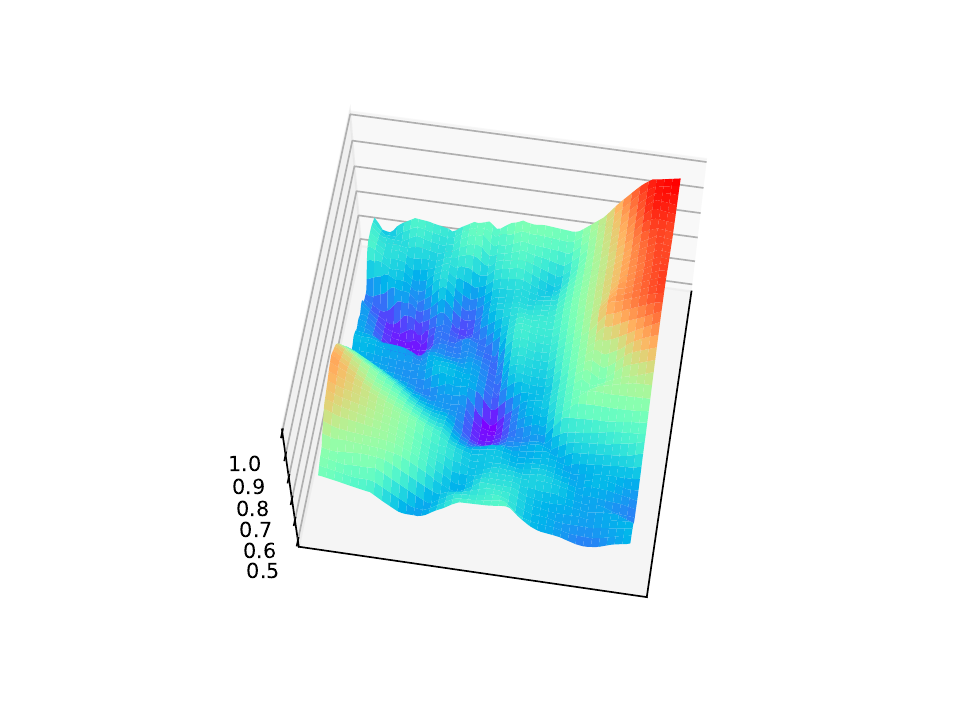} \\
			\includegraphics[width=0.48\textwidth,trim=60 50 50 50,clip]{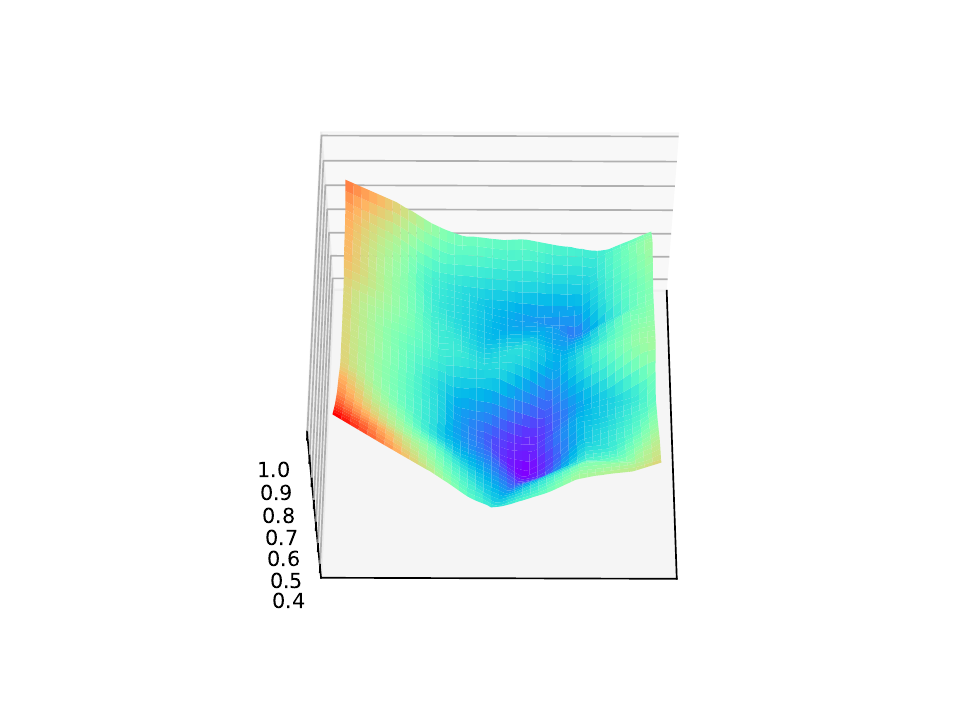}   
			\includegraphics[width=0.48\textwidth,trim=60 50 50 
			50,clip]{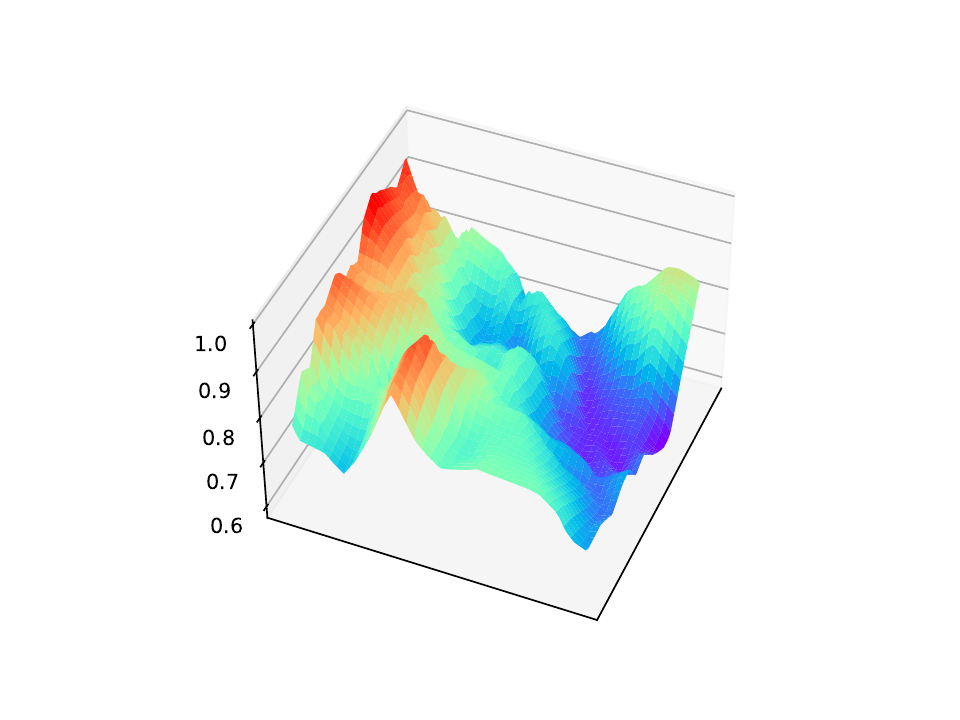} \\ 
	\end{minipage}   }   
	\subfloat[ABS~($L=30$)]{  
		\begin{minipage}[b]{0.3\textwidth}  
			\includegraphics[width=0.48\textwidth,trim=60 50 50 
			50,clip]{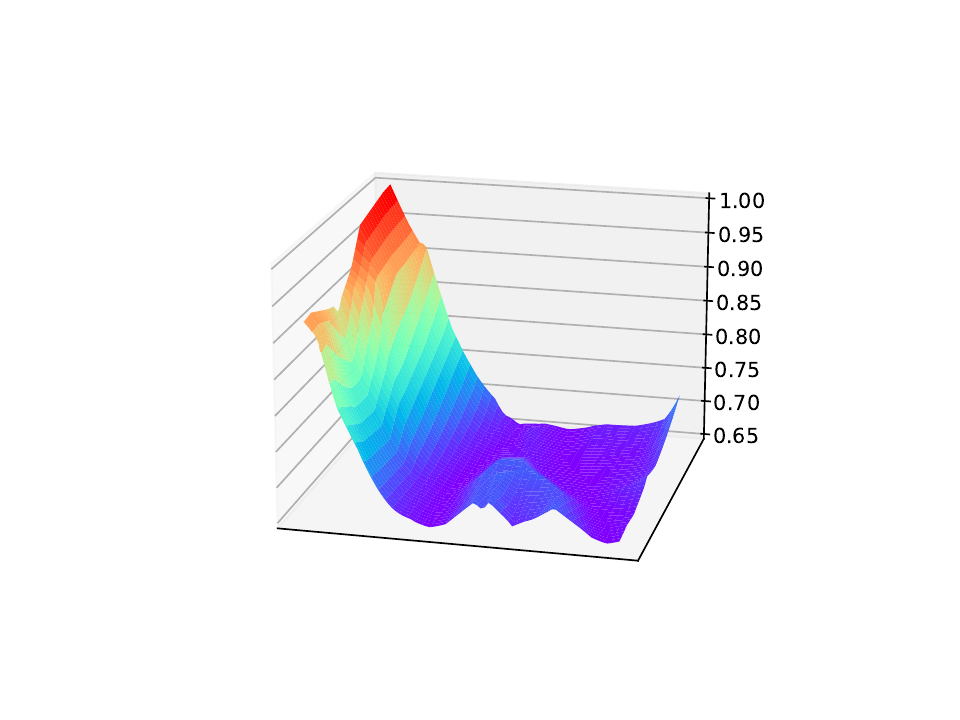}    
			\includegraphics[width=0.48\textwidth,trim=60 50 50 
			50,clip]{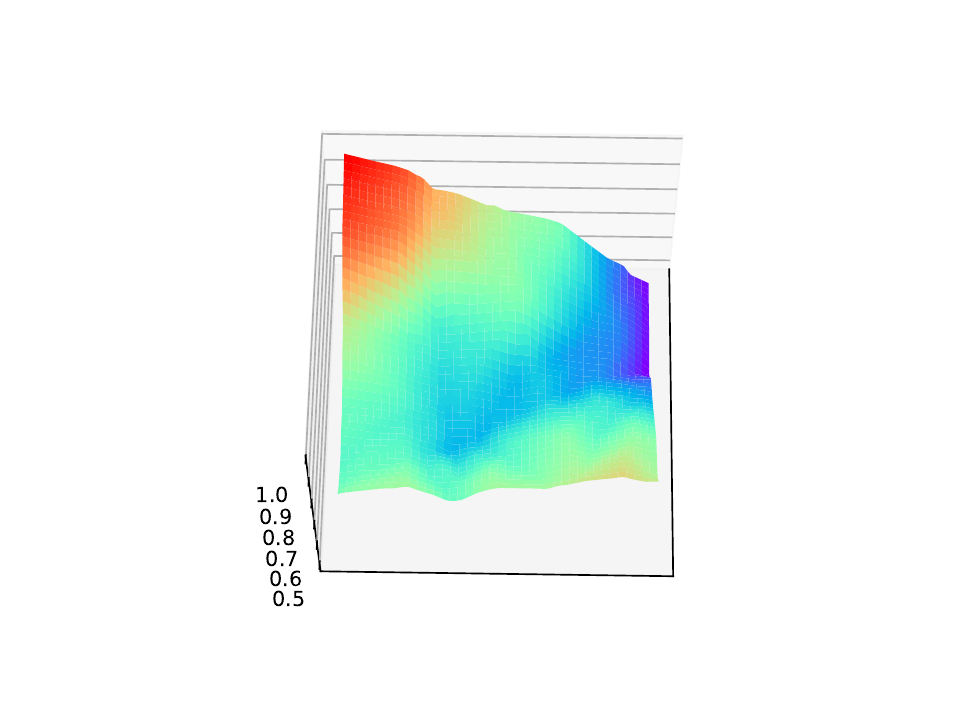} \\
			\includegraphics[width=0.48\textwidth,trim=60 50 50 
			50,clip]{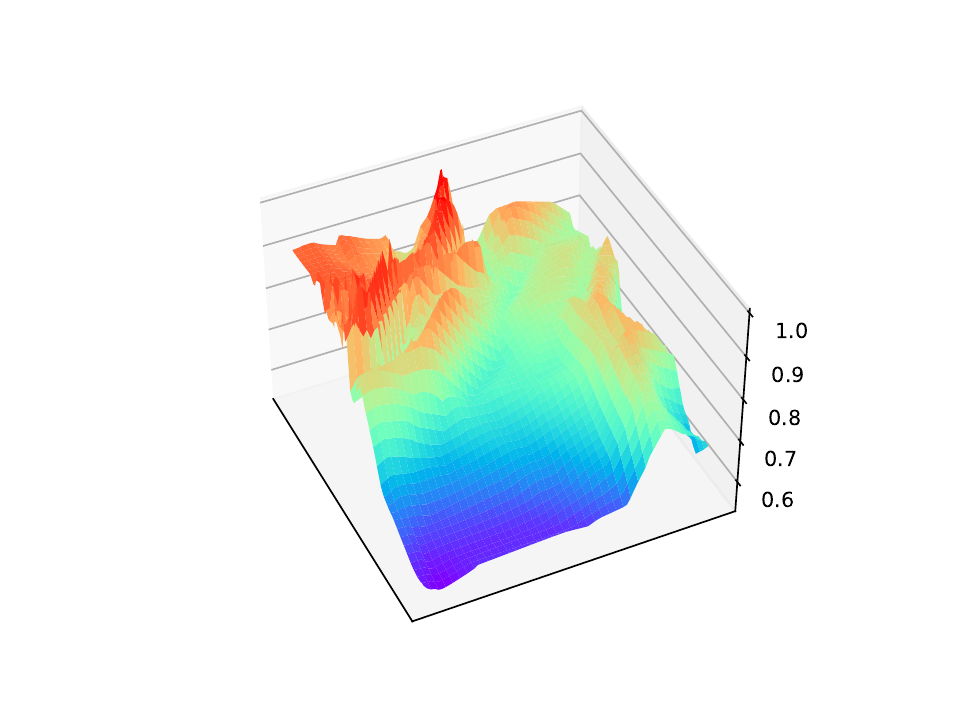}   
			\includegraphics[width=0.48\textwidth,trim=60 50 50 
			50,clip]{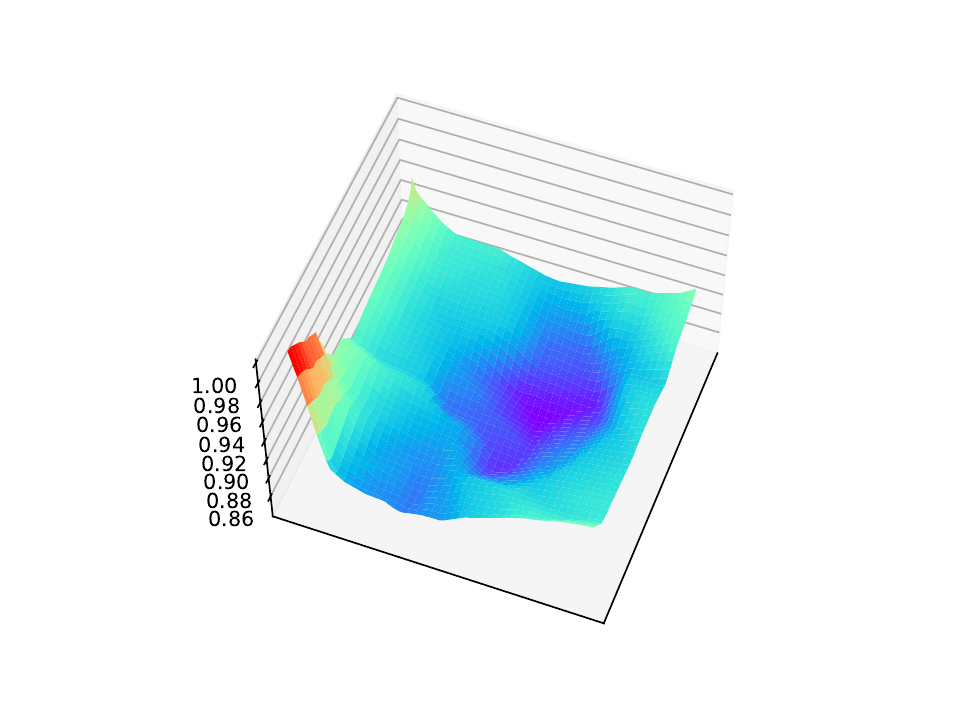} \\ 
	\end{minipage}   }   
	
	\caption{Landscape of $\|F_L(x,\bW,\bb)\|^2$. Each block consists of four landscapes obtained from different random parameter samples. }
	\label{fig:visualization-of-3network}
\end{figure*}

\emph{Variability represents the richness of landscape patterns of neural network maps in data space for well-scaled random parameters}. In particular, we consider the landscapes of the MLP function $F_L(x) := F_L(x,\bW,\bb)$ for fixed but random $\bW$ and $\bb$ that are initialized and scaled as described above.

A network function of high variability exhibits great variations in data space and is also sensitive to parameter changes. In contrast, a low variability network possesses few pattern changes in data space and is insensitive to parameter changes.  Intuitively, the latter should be more difficult to train.

\subsection{Visualizaton}
\label{sec:see-nn}

We start with a set of simple experiments in Section~\ref{sec:see-nn} to observe landscapes of MLP function $F_L(x)$ as the network depth $L$ increases while the total number of parameters $N_w$ is kept as a constant. 

We will visualize the surface $z = \|F_L(x,\bW,\bb)\|^2$ in $\R^3$ over $x \in [-1,1]^2 \subset \R^2$ for a sequence of randomly sampled, properly scaled and fixed parameter pairs $(\bW,\bb)$.  To do so, we discretize the square $[-1,1]^2$ by an $81\times 81$ uniform grid consisting of 6561 grid points.  In all cases, whenever we vary the MLP depth $L$, we adjust the width $d$ accordingly so that the total number of model parameters is fixed at $N_w=3200$.

We present our visualization results in Figure~\ref{fig:visualization-of-3network} for three activation functions (Sigmoid, ReLU and ABS) and three depth values.  For each of these nine cases, we present a block of four plots for $z=\|F_L(x,\bW,\bb)\|^2$ (where $z$ has been scaled into the range $[0,1]$) corresponding to four different random values of $(\bW,\bb)$.

\subsection{Observations}
From Figure~\ref{fig:visualization-of-3network}, we make the following observations.
\begin{enumerate}
\item 
In the first row for the Sigmoid function, the surfaces are rather monotonous with few up-and-down variations either in the data space (within each plot) or in the parameter space (across plots).
Most remarkably, at a depth of $L=20$, the surfaces essentially become constant.  For brevity, we will call this phenomenon as  "Collapse to Constant" or simply C2C.
\item 
In the second row for the ReLU function, we find landscapes with much richer expressions in the data space, and with an increasing amount of variations as $L$ grows from 1 to 10. However, C2C also occurs for the ReLU function at $L=30$. (Further experiments, not presented here, confirm that other ReLU-like functions, such as Leaky-ReLU, suffer from C2C as well).
\item
In the third row for the ABS function, the plots follow the similar trend as for the ReLU function for $L=1$ to 10.  On the other hand, C2C has not yet occurred when $L=30$, although the variations in the landscape appear to have started diminishing.
\end{enumerate}
Upon further examinations, it is clear that for large $L$ not only the scalar function $\|F_L(x,\bW,\bb)\|$ tends to constants in $x$-space, but in fact the vector-valued function $F_L(x,\bW,\bb)$ tends to constant vectors for $x \in [-1,1]^2$.  We will explain this C2C phenomenon later.

From the experimental results presented in this section, we witness unmistakable differences in the outputs of MLP functions $F_L(x,\bW,\bb)$ as the result of different activation functions and different values of the depth $L$.

\section{Two measures of variability}

In general, measuring variability in neural networks is a critical yet highly challenging task. This paper proposes two sensible measures, which, however, are not yet suitable for high-dimensional data. Despite their limitations, these metrics provide new insights into neural networks.

\subsection{Variability arising with activation ratio }

It should be clear that the nonlinear activations in the model are the source of variability.  As we see from the previous variability visualizations, variability initially always arises.  There is a simple explanation for this.  That is, when the total number of parameters is fixed,  the number of activations always increases with the depth.  

To further illustrate this point, let us consider MLPs \eqref{def:Fk} with the number of parameters \eqref{eq:Nw}. In this case, the total number of activations is $(L+1)d$. The \emph{activation ratio} $\rho$ of this network is defined as the total number of activations divided by the total number of parameters, i.e., $(L+1)d/N_w$, which represents the average number of activations per weight, $L\ge1$,

\begin{small}
\begin{equation} \label{eq:activation ratio}  
	\rho(L)~ = \frac{(L+1)\left(\sqrt{(L+5)^2+4(N_w-2)L}-L-5\right)}{2N_wL},
\end{equation}
\end{small}
the activation ratio is monotonically increasing with $L$.

It is worth noting that the activation rate is a simplistic and crude metric, as it does not take into account the type of activation functions, which can clearly make huge differences in landscape patterns as our visualization experiment indicates. Moreover, the ratio $\rho(L)$ is monotonically increasing.  As we will show soon,  it would eventually deviate from the ``true variability" as $L$ become ``too large", even though it works quite well (for activation functions RELU and ABS) before $L$ becomes too large.

\begin{figure*}[htb]
	\begin{center}
		\vspace{-.1cm}
		\subfloat[$N_w=1600$, Sigmoid~(left), ReLU~(middle) and ABS~(right).]{
			\includegraphics[width=.32\textwidth,trim=0 20 20 0, clip]{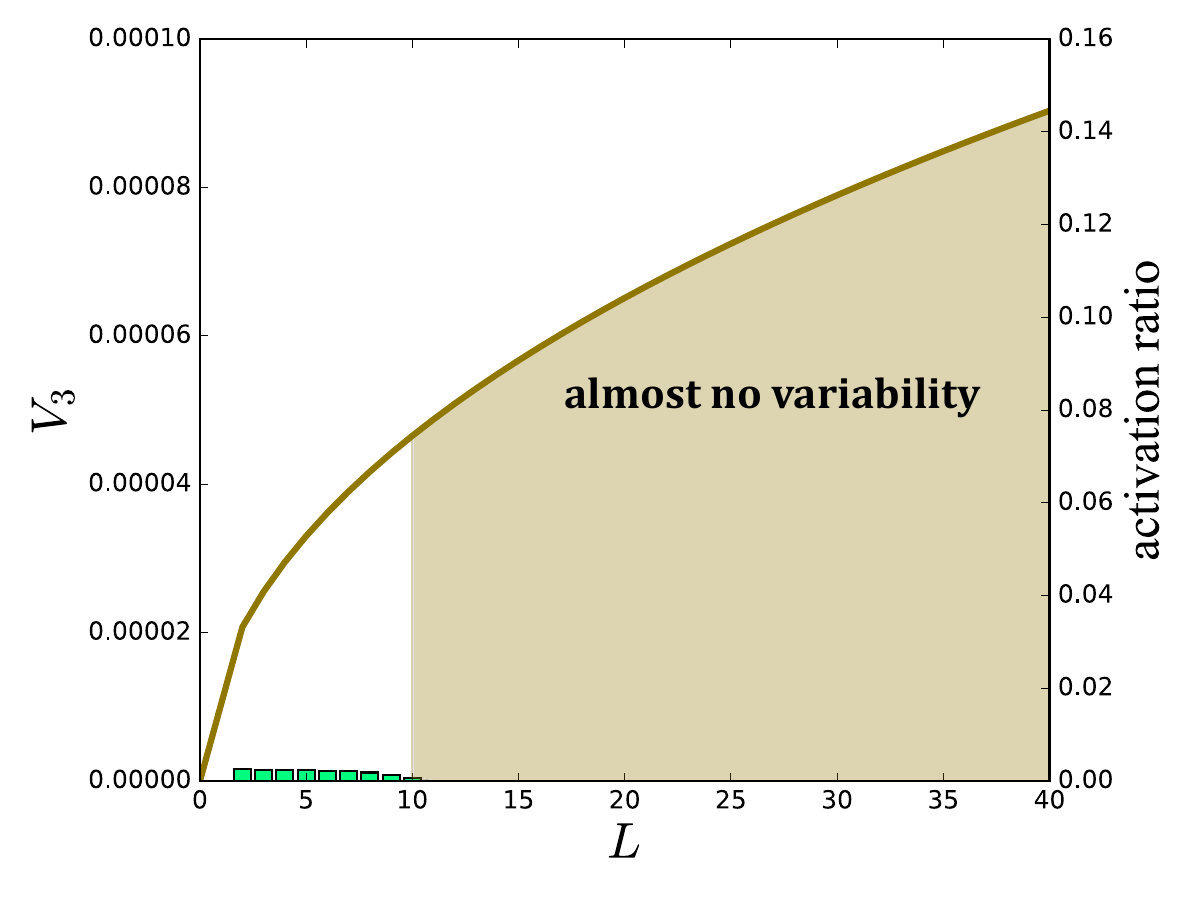}
			\includegraphics[width=.32\textwidth,trim=0 20 20 0, clip]{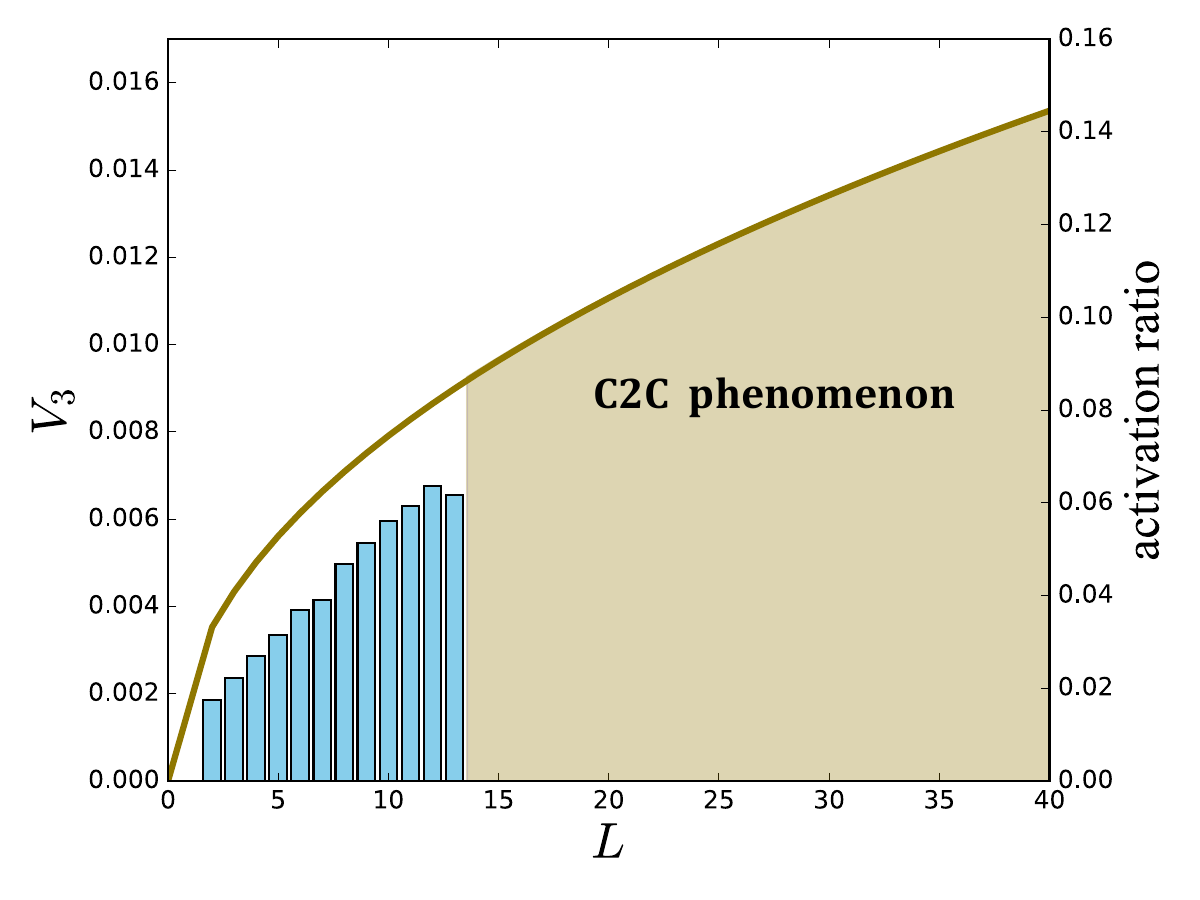}
			\includegraphics[width=.32\textwidth,trim=0 20 20 0, clip]{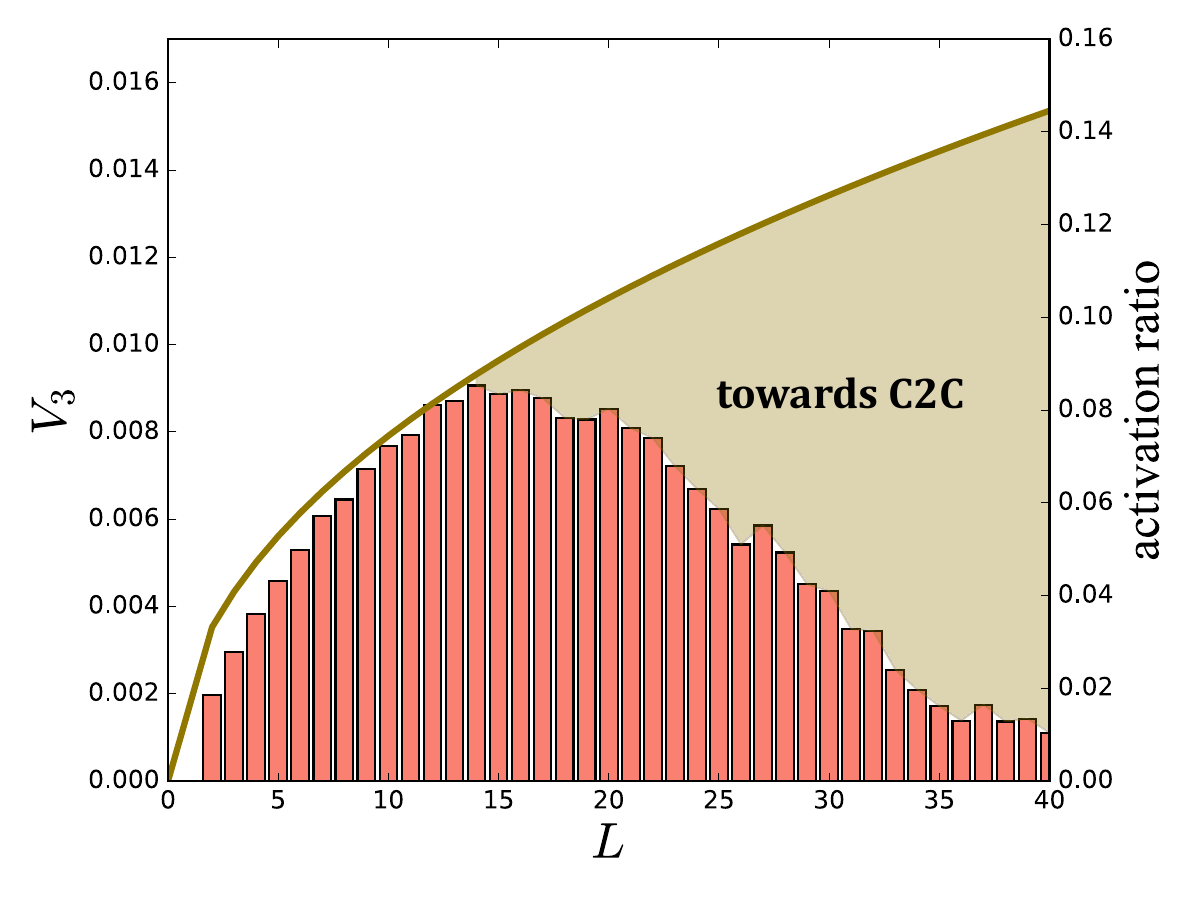}}~~
		
		\subfloat[$N_w=3200$.]{
			\includegraphics[width=.32\textwidth,trim=0 20 20 0, clip]{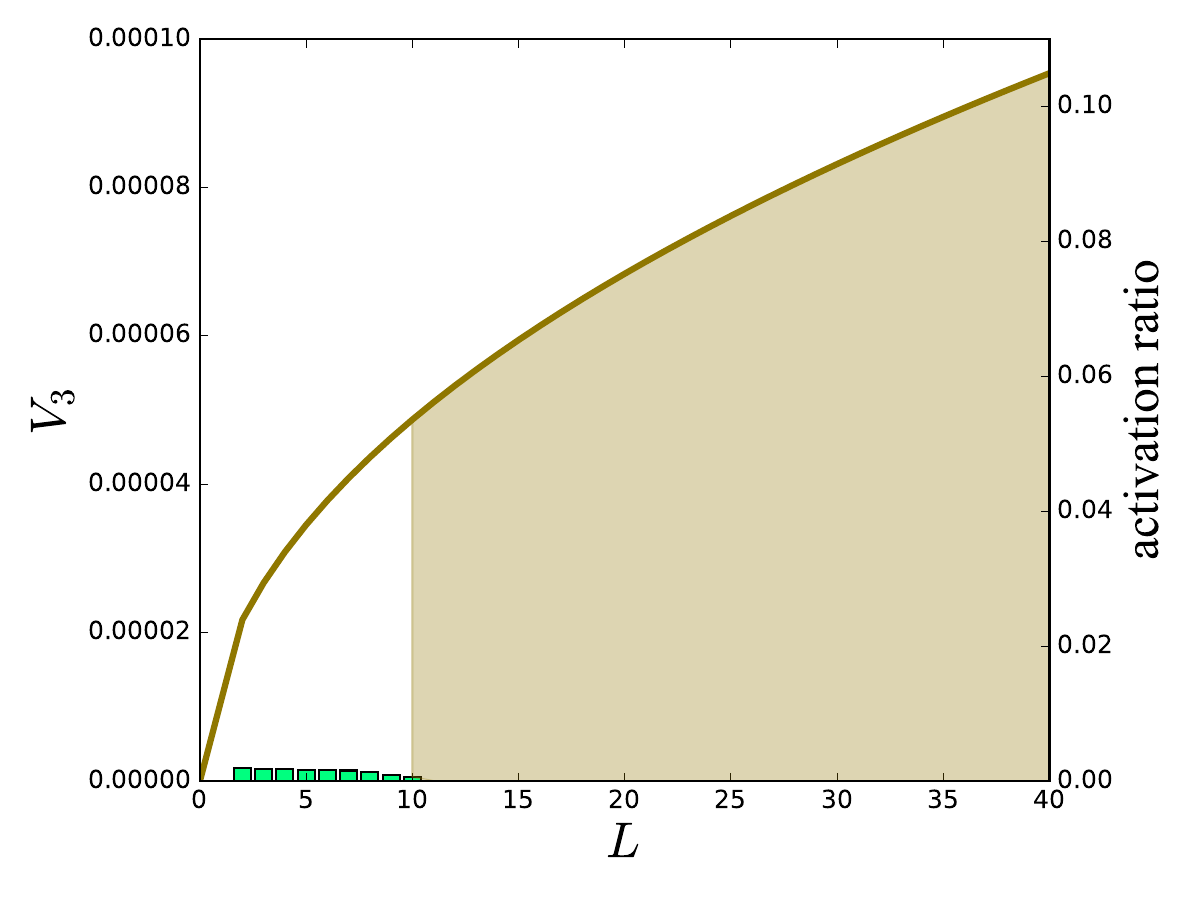}
			\includegraphics[width=.32\textwidth,trim=0 20 20 0, clip]{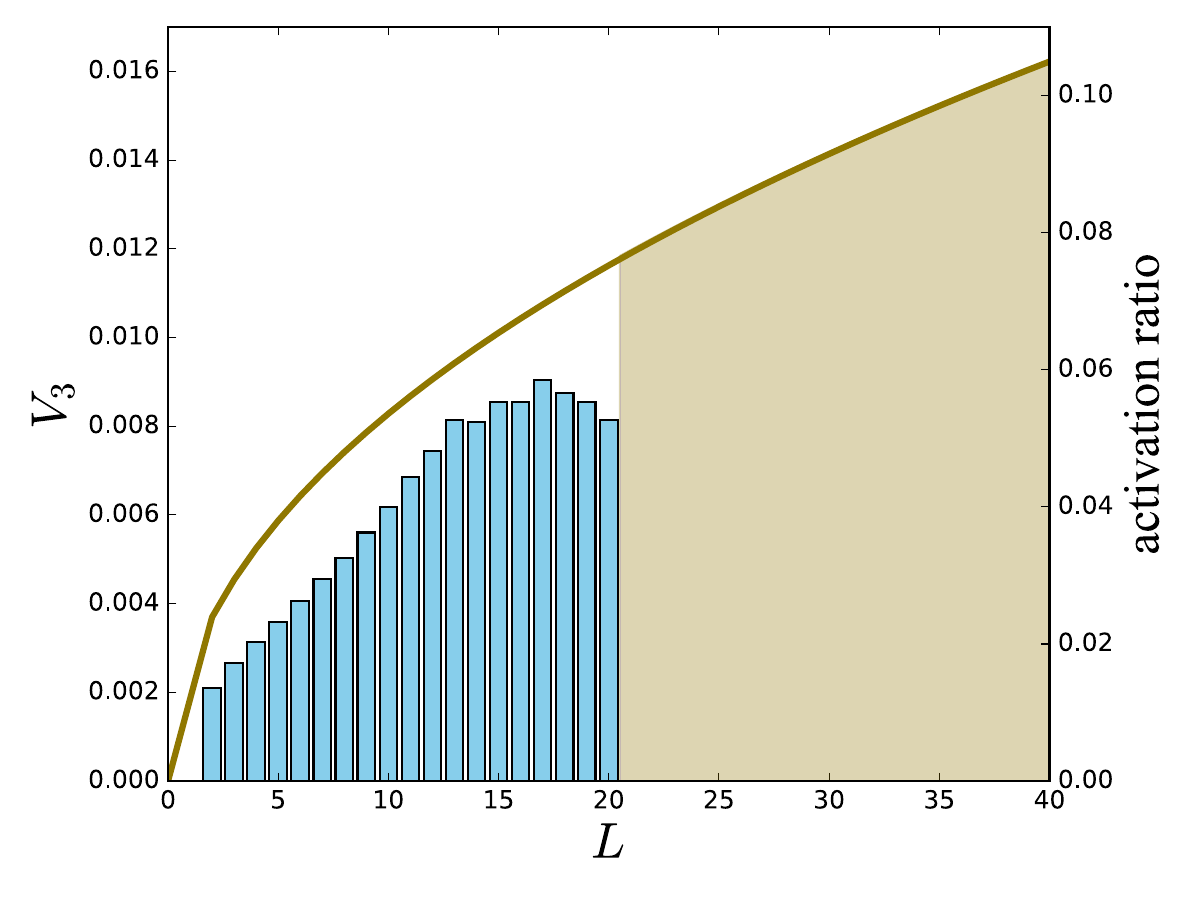}
			\includegraphics[width=.32\textwidth,trim=0 20 20 0, clip]{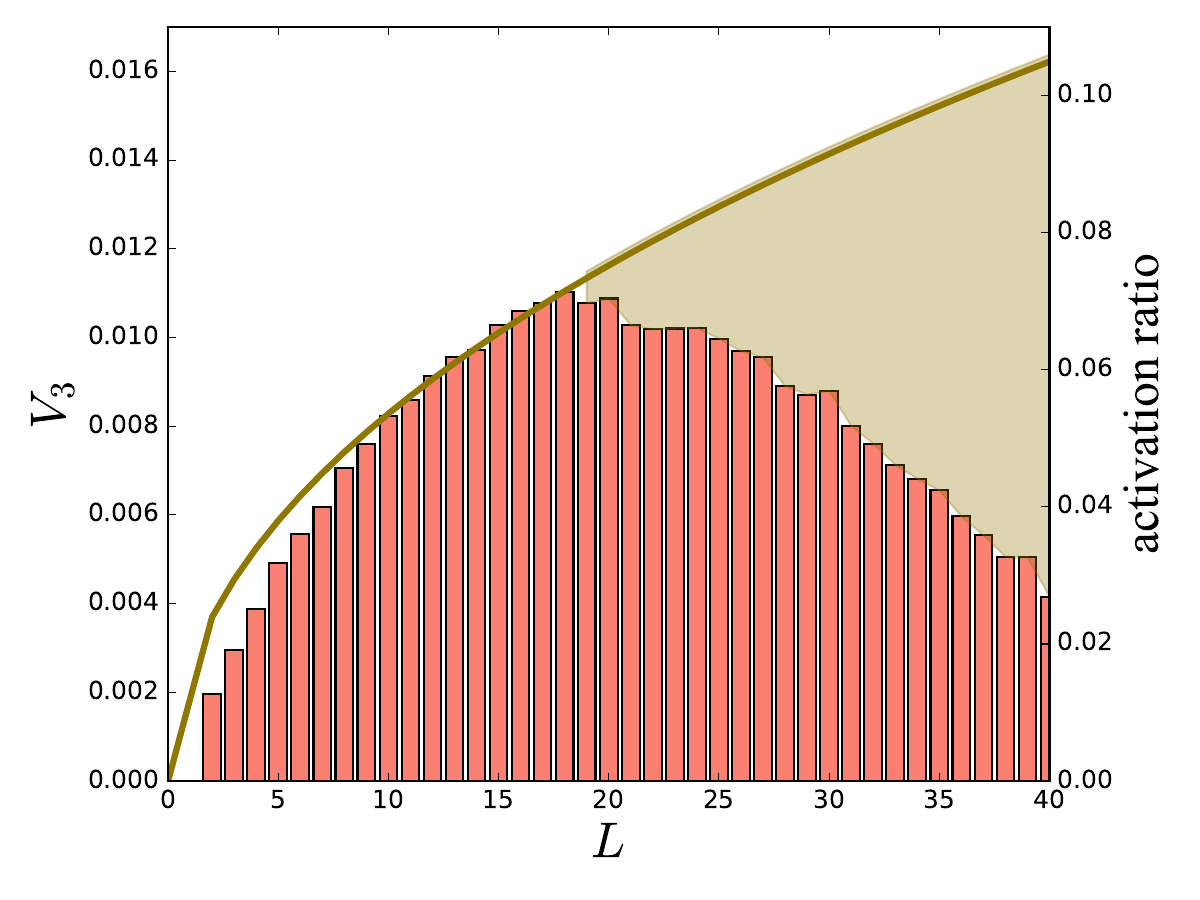}}~~
		
		\vspace{-.1cm}
		\caption{Variability measured $V3$ for three activation functions. Each bar represents a geometric mean of 3000 parameter samples. As depth $L$ grows, the width $d$ decreases so that the total number of model parameters is approximately 1600 (in the first row) or 3200 (in the second row). }
		\label{fig:activation-var}
	\end{center}
	\vspace{-.5cm}
\end{figure*}

\subsection{A more accurate measure:  $V_3$}

We present another variability measurement that has worked reasonably well in the scope of the current work. 
Under the assumption of differentiability, we define
\begin{equation}\label{def:V3}
	\mathrm{V_3} :=
	\mathbb{E}_{(\bW,\bb)}\!\!\left[\|f\|^{-1}_\infty~
	\mathbb{E}_{x\in \Omega}\left(\sum_{i=1}^d\left|\frac{\partial^3 f}{\partial^3 x_i}\right|\right)\right]
\end{equation}
where $f:= \|F_L(x;\bW,\bb)\|_2^2$, $\Omega$ is the data domain (i.e., $[-1,1]^2$ in our case), and $(\bW,\bb)$ are random parameters specified by the initialization schemes in \ref{scale}.  

The quantity $V_3$ measures the relative size of the third partial derivatives of $f$ with respect to each variable over the data domain, and then takes a mean value over the relevant random parameters.

We provide a brief justification for formula \eqref{def:V3}, in particular the use of the third derivatives.  For any fixed random parameters, if $F_L(x;\bW,\bb)$ is linear in $x$, then $f$ is quadratic and $V_3 = 0$, meaning linear networks have no variability.  This would not be the case if variability were defined only through the second derivatives. Conversely, models with high variability should have relatively large third derivatives with high probability.  For computational efficiency, we only use the principal third-order derivatives while ignoring all the cross-derivatives.

Practical MLP models, such as those activated by RELU or ABS, are often non-differentiable. For this reason and also for computational reasons, in our experiments we replace the third derivatives in \eqref{def:V3} by third-order finite differences instead, that is, on a uniform, $81 \times 81$ grid over $[-1,1]^2$, we compute
\[
\frac{\Delta^3 f}{\Delta x_i^3}\approx\frac{\partial^3 f}{\partial^3 x_i},
\]
and take the empirical algebraic means over the afore-mentioned grid on $[-1,1]^2$.

Additionally, in computing \eqref{def:V3} we use empirical geometric means by randomly sampling the parameters $(\bW,\bb)$.  Clearly, we will get $V_3=0$ if a single sampled value is zero inside the brackets on the right-hand side of \eqref{def:V3}.  Consequently, variability measure $V_3$ vanishes once C2C occurs; i.e., the network output $F_L(x;\bW,\bb)$ becomes a constant on the grid.

Figure~\ref{fig:activation-var} shows how $V_3$ varies as MLP depth $L$ increases from 1 to 40 while the parameter number $N_w$ is fixed to either 1600 or 3200.  The empirical geometric mean values are taken over 3000 parameter samples. The tested MLPs are activated by one of the three functions: Sigmoid, ReLU and ABS.  The activation ratio $\rho(L)$, which is a quantity independent of activation functions, is plotted along with computed $V_3$ values.  

As we can see from Figure~\ref{fig:activation-var}, Sigmoid hardly has any variability, as is measured by $V_3$, in comparison to both ReLU and ABS.  For the latter two, $V_3$ follows quite closely with the trend of $\rho(L)$ until the depth $L$ grows  too deep.  Then, C2C phenomenon occurs for ReLU promptly after $V_3$ passes its peak value.  On the other hand, for ABS the C2C phenomenon progresses much more gradually and slowly.  This observation suggests that, under the standard initialization scheme, ABS activation can provide MLPs with much higher variability than ReLU does.

We note that, as it is defined in \eqref{def:V3}, the variability measure $V_3$ cannot be efficiently applied for high-dimensional data due to a curse of dimensionality.  On the other hand, the formula might still be useful for sampling low-dimensional subspaces to obtain partial information on model variability.

\section{Collapse to Constant and its characterization}
In this section, $F_L(\cdot)$ is the all hidden layer function
\begin{equation}
	F_L(\cdot)= (\psi_L\circ\cdots\circ \psi_1)(\cdot): \R^d \rightarrow\R^d.
\end{equation}

To compute the derivative of $F_L(x,\bW,\bb)$ with respect to the parameters, one uses the chain-rule to obtain so-called back-propagation formulas, such as
\begin{equation*}
\left[ \frac{\partial}{\partial b_1} F_L(x,\bW,\bb) \right]^{\T} = 
D_{\phi}(z_1)W_2^{\T}D_{\phi}(z_2) \cdots  W_{L}^{\T} D_{\phi}(z_L) ,
\end{equation*}
where $z_k$, for $k=1, \cdots, L$, are computed in \eqref{def:recursion} and $D_{\phi}(z_k)$ are diagonal matrices with scalar-valued $\phi'$ applied component-wise to $z_k$.  It is well-known that the behavior of the derivatives is critically determined by the properties of the above matrix product. For convenience and without loss of generality, we add $W_1$ to the product and define
\begin{equation}\label{def:G_L}
G_L \equiv G_L(x,\bW,\bb) := \prod_{k=1}^L W_k^{\T} D_{\phi}(z_k),
\end{equation}
which we will simply call the $G$-matrix at $x$ associated with the network $F_L(x,\bW,\bb)$.  It is well known in deep learning that excessively small (or large) size of $G_L$ causes vanishing (or exploding) gradient, which is a major source of difficulty in training.

Now we define another matrix product called the $C$-matrix, by replacing the derivative $D_{\phi}(z_k)$ in \eqref{def:G_L} by the finite difference between two points $z_k$ and $\bz_k$; that is,
\begin{equation}\label{def:C_L}
C_L \equiv C_L(x,\bx,\bW,\bb) := \prod_{k=1}^L W_k^{\T}\hat{D}_\phi(z_k,\bz_k),
\end{equation}
where $\hat{D}_\phi(\cdot,\cdot) \in \R^{d\times d}$ is a diagonal matrix defined by 
\begin{equation}\label{def:diagD}
\left[\hat{D}_\phi(u,v)\right]_{ii} = \frac{[\phi(u)-\phi(v)]_i}{[u-v]_i}, \;\; i = 1,\cdots,d,
\end{equation}
with the convention $0/0 = 1$, $\{z_k\}$ and $\{\bz_k\}$ are computed via the recursion \eqref{def:recursion} starting from $x$ and $\bx$, respectively.  By their definitions, it is clear that, for differentiable activation functions, G-matrices are limits of C-matrices.   Nevertheless, in general these two types of matrices are different.

The next proposition shows that C-matrices characterize the C2C phenomenon.
\begin{proposition}
Let network $F_L(x,\bW,\bb): \R^{d} \rightarrow \R^{d}$ be defined as in \eqref{def:Fk}.  
For any two distinct points $x,\bx \in \R^{d}$, there holds
\begin{equation}\label{eq:dF}
F_L(x)-F_L(\bx) = \left[C_L(x,\bx)\right]^{\T}(x-\bx).
\end{equation}
Consequently, $\lim_{L\rightarrow\infty} C_L(x,\bx) = 0$ implies
 \begin{equation}\label{diff2const}
\lim_{L\rightarrow\infty}(F_L(x)-F_L(\bx)) = 0. 
\end{equation}
\end{proposition}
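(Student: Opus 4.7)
The plan is to prove the identity \eqref{eq:dF} by induction on the depth $L$, using the fact that $D_\phi(u,v)$ is built exactly so that the componentwise finite-difference identity $\phi(u)-\phi(v) = D_\phi(u,v)(u-v)$ holds (the convention $0/0=1$ covers the case $u_i=v_i$). The implication \eqref{diff2const} then follows immediately from \eqref{eq:dF} by taking norms.

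For the base case $L=1$, write $F_1(x)-F_1(\bx) = \phi(z_1)-\phi(\bz_1)$ with $z_1 = W_1 x + b_1$ and $\bz_1 = W_1 \bx + b_1$. The finite-difference identity gives $\phi(z_1)-\phi(\bz_1) = D_\phi(z_1,\bz_1)(z_1-\bz_1)$, while the affine structure gives $z_1-\bz_1 = W_1(x-\bx)$. Combining, and using that $D_\phi$ is diagonal (hence symmetric), yields
\begin{equation*}
F_1(x)-F_1(\bx) = D_\phi(z_1,\bz_1)W_1(x-\bx) = \bigl[W_1^T D_\phi(z_1,\bz_1)\bigr]^T(x-\bx) = C_1^T(x-\bx).
\end{equation*}

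For the inductive step, assume $s_{L-1}-\bs_{L-1} = C_{L-1}^T(x-\bx)$, where $s_k,\bs_k$ are generated by \eqref{def:recursion} from $x$ and $\bx$ respectively. Apply the recursion once more: $z_L-\bz_L = W_L(s_{L-1}-\bs_{L-1})$, and then
\begin{equation*}
F_L(x)-F_L(\bx) = \phi(z_L)-\phi(\bz_L) = D_\phi(z_L,\bz_L)\,W_L\,C_{L-1}^T(x-\bx).
\end{equation*}
Since $D_\phi(z_L,\bz_L)$ is symmetric, the prefactor equals $\bigl[C_{L-1}\,W_L^T D_\phi(z_L,\bz_L)\bigr]^T = C_L^T$, closing the induction. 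The consequence \eqref{diff2const} is then just $\|F_L(x)-F_L(\bx)\| \le \|C_L(x,\bx)\|\,\|x-\bx\| \to 0$.

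Honestly, there is no substantive obstacle here; the proposition is essentially a telescoping/chain-rule statement for finite differences. The only subtlety worth flagging is the $0/0=1$ convention on the diagonal of $D_\phi$, which is what makes the identity $\phi(u)-\phi(v)=D_\phi(u,v)(u-v)$ hold unconditionally and so lets the induction run for any two points $x,\bx$, regardless of whether some coordinates of the pre-activations happen to coincide. The transposition bookkeeping (the product defining $C_L$ uses $W_k^T$ whereas the recursion naturally produces $W_k$ on the left, so one must transpose once at the end) is the only other thing to keep straight.
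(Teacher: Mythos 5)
Your proof is correct; the paper itself omits the verification as ``straightforward,'' and your induction on $L$ using the componentwise identity $\phi(u)-\phi(v)=D_\phi(u,v)(u-v)$ (with the $0/0=1$ convention) together with the symmetry of the diagonal factors is exactly the intended telescoping argument. The transposition bookkeeping and the norm bound for \eqref{diff2const} are both handled correctly.
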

The verification of this proposition is straightforward so we omit it.

We note that the difference going to zero in \eqref{diff2const} does not imply that each individual sequence goes to the same limit.  On the contrary,  limits generally do not exist if the bias sequence $\{b_k\}$ is bounded away from zero.  

Regarding the C2C phenomenon, the following remarks are in order.
\begin{itemize}
\item 
Wherever $C_L(x,\bx)$ is sufficiently small in some induced matrix norm $\|\cdot\|$, the output values of the network for the two inputs $x$ and $\bx$ will be close to each other.
\item
$C_L(x,\bx)$ will be small if $\|W_k^{\T}\hat{D}_\phi(z_k,\bz_k)\|$ are sufficiently smaller than 1 for sufficiently many $k \in \{1,\cdots,L\}$.
%which occurs when one or both of the matrices is sufficiently small.
\item
If $C_L(x,\bx)$ is sufficiently small for all $(x,\bx)$ in some region, then the corresponding outputs of the network will be like a constant in that region.  In particular, this can happen when $W_k$ are small for all or many $k$.
\end{itemize}

If the weight matrices $W_k$, $k=1,\cdots,L$, are properly normalized (for example, all $W_k$ are orthogonal matrices), then the size of $C_L$ will be determined by that of $\hat{D}_\phi(z_k,\bz_k)$ for a given point pair $(x,\bx)$, which in turn depends on activation $\phi$ in use. 
We now consider ReLU and ABS functions. In both functions, the diagonal entries defined in \eqref{def:diagD} all lie in the interval $[-1,1]$.
%satisfy $[\hat{D}_\phi(u,v)]_{ii} \in [-1,1]$.
%If the weight matrices $W_k$, $k=1,\cdots,L$, are properly normalized (for example, all $W_k$ are orthogonal matrices), then the size of $C_L$ will be determined by that of $\hat{D}_\phi(z_k,\bz_k)$ for a given point pair $(x,\bx)$, which in turn depends on activation $\phi$ in use. 

\begin{proposition}
Suppose that $u,v\in\R^d$ be i.i.d. random variables with 
\[
\pr([u]_i\ge 0) =  \pr([v]_i\ge 0) = p \in (0,1),
\]
where $i=1,\cdots,d$.  Let $P_{eq}$ be the probability of the event 
$\left\{|\phi(u)-\phi(v)|=|u-v|\right\}$
where the absolute values are taken component-wise.
Then
\begin{eqnarray} \label{prob:preserve}
%pr \left(|\phi(u)-\phi(v)|=|u-v|\right) = 
P_{eq} = 
\left\{
\begin{array}{cc}
p^{2d}, & \phi(t) = \max(0,t), \\[1.5mm]
\left(p^2+(1-p)^2\right)^d, & \phi(t) = |t|.
\end{array}\right.
\end{eqnarray}
\end{proposition}
\begin{proof}
Consider the scalar case $d=1$ with $ u\ne v$.  For ReLU function $\phi(t)=\max(0,t)$,
\begin{equation*}\label{}
\left|\frac{\phi(u)-\phi(v)}{u-v}\right| \left\{
\begin{array}{cc}
 =1, &  u,v \ge 0\\
 <1, &  \mbox{otherwise} 
\end{array}
\right.
\end{equation*}
where the probability for the first case (ratio equal to 1) is $p^2$.
For absolute value $\phi(t)=|t|$,
\begin{equation*}
\left|\frac{\phi(u)-\phi(v)}{u-v}\right| \left\{
\begin{array}{cc}
 =1, &  uv \ge 0 \\
 <1, &  \mbox{otherwise}
\end{array}
\right.
\end{equation*}
where the probability for the first case (ratio equal to 1) is $p^2+(1-p)^2$.

Since all the components are i.i.d., by raising the above probabilities to their $d$-th power, we obtain the corresponding probabilities for the vector case $d>1$.
\end{proof}

The proposition indicates that the probability for ReLU to preserve distances in $\R^d$ is much smaller than that for absolute-value.   In particular, for $p=1/2$ the above two probabilities in \eqref{prob:preserve} become $1/4^d$ and $1/2^d$, respectively; this is, the latter is $2^d$ times larger than the former.  Nevertheless, for either function the probability is large that the diagonal elements $|[\hat{D}_\phi(u,v)]_{ii}|<1$ for at least some indices $i$.  

Figure~\ref{fig:CnG-matrices} illustrates the experimental results for C- and G-matrices. We fix $d$ and increase $L$  instead of keeping $N_w$ constant to ensure that the size of all involved matrices remains unchanged for all $L$.

The left subplot of Figure~\ref{fig:CnG-matrices} shows the spectral norms of matrices $C_L$ and two $G_L$ matrices plotted against depth $L$ ranging from 1 to 1000, for a randomly selected point pair $(x,\bx)$, where ReLU activations are used. Although all three curves exhibit similar patterns of ups and downs, the C-matrix is consistently smaller than the two G-matrices, with a difference of at least one magnitude after $L \ge 200$. This finding suggests that, at least under our experiment setting, C2C should be the main factor responsible for the loss of trainability, rather than the commonly assumed vanishing gradient.
\begin{figure}[htb]
	\begin{center}
		\includegraphics[width=.24\textwidth]{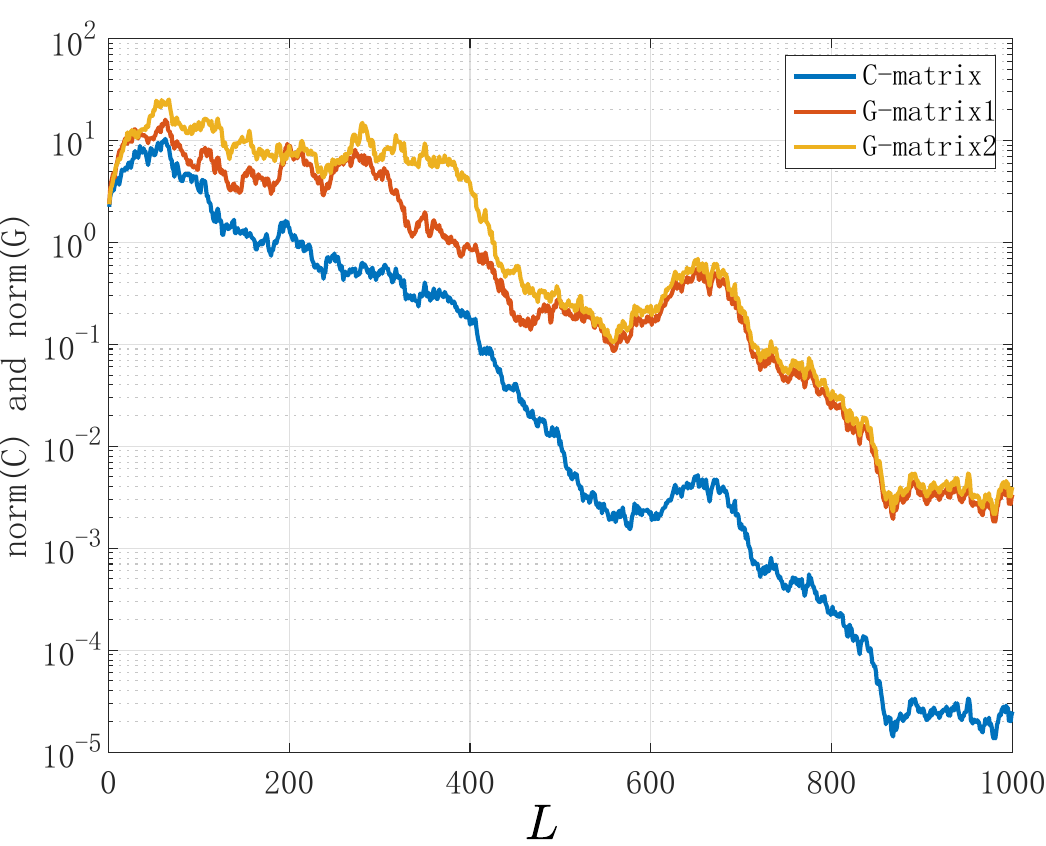}
		\includegraphics[width=.24\textwidth]{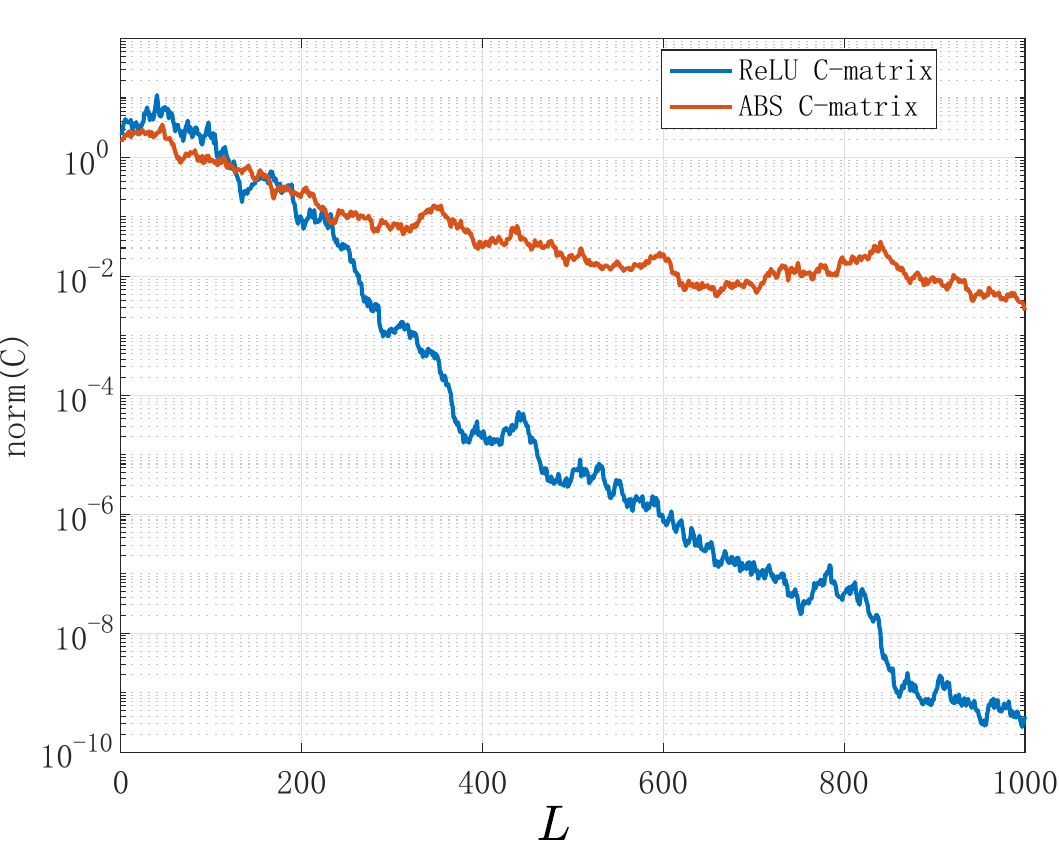}
	\vspace{-.75cm}
	\caption{Norms of C- and G-matrices for ReLU and ABS activations.}
	\label{fig:CnG-matrices}
	\end{center}
	\vspace{-.5cm}
\end{figure}

In the right subplot, the comparison of C-matrices between ReLU and absolute value activations shows that ReLU is more vulnerable to the negative effects of C2C than absolute value activation function, as the C-matrix for ReLU is orders of magnitude smaller than that for the absolute value activation function when depth is large.

\section{Variability vs. Trainability} \label{sec: var vs train}

We have observed that variability of MLPs with a fixed number of parameters changes with depth: it is low at first, then peaks, and then decreases due to C2C. In this section, we present numerical evidence that links the pattern of variability change to the training performance of MLPs. We suggest that variability can serve as a predictive indicator of trainability of MLPs, as $V_3$ correlates highly with the training performance of the corresponding neural networks.

\subsection{Experiment setting}

Our experiments are conducted on a styled synthetic model \emph{checkerboard},  which consists of 6561 mesh points on an $81\times 81$ grid over the square $[-1,1]^2$ in $\mathbb{R}^2$.  These mesh points are divided into two sets, one corresponding to 0-labels and another to 1-labels, so that together they form an 8 by 8 checkerboard blocks, as is shown in Figure \ref{fig:prob4}, where each of the 64 squares contains $81$ grid points and the surrounding edges contain 1377 points.  The blocks take either 0 or 1 (blue or red) label in an alternating pattern, and the surrounding edges all take the 0-label.  In essence, we aim to approximate the piecewise linear, non-smooth function shown in the right plot of Figure~\ref{fig:prob4}.
\begin{figure}[ht]
	%    \vspace{-1em}
	\centering
	\includegraphics[width=.23\textwidth,trim=30 20 30 30, clip]{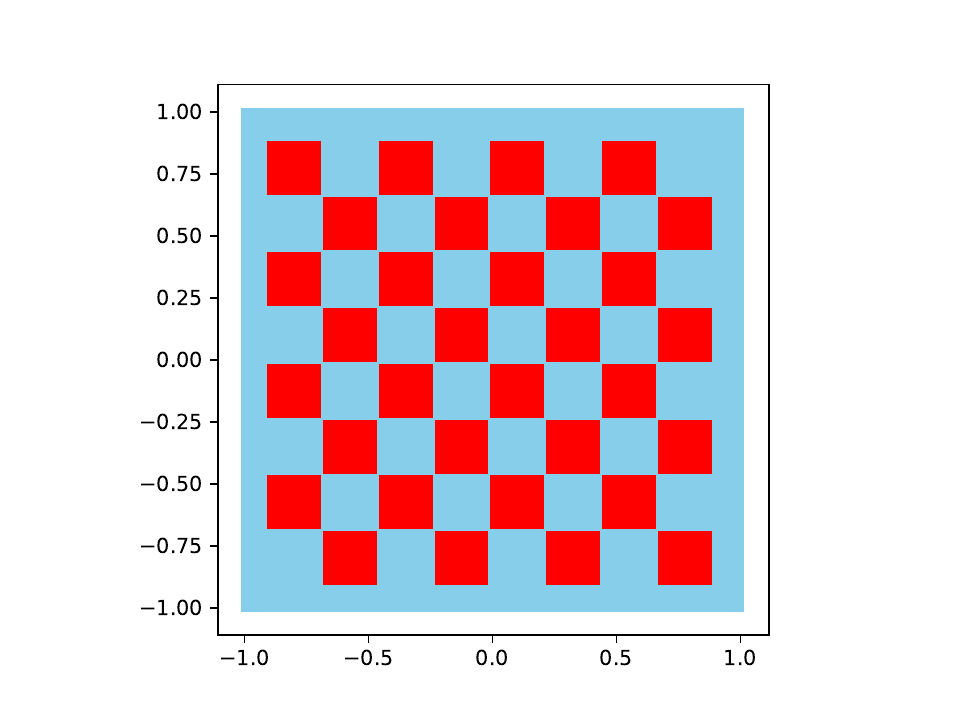}
	\includegraphics[width=.23\textwidth,trim=60 50 60 60, clip]{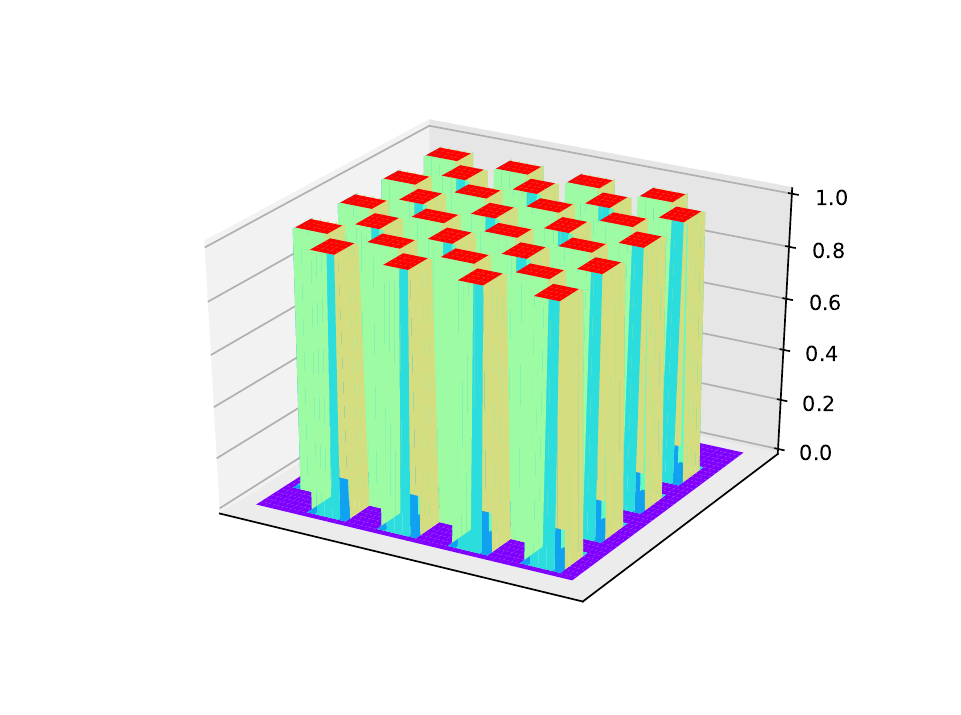}
	%    \vspace{-1em}
	\caption{Checkerboard: left plot for data points with two (colored) classes; right plot for corresponding  binary labels.}
	\label{fig:prob4}
\end{figure}

 \begin{figure*}[th]
 	\vspace{-1.cm}
 	\centering
 	\subfloat[$N_w=1600.$]{
 		\includegraphics[width=.48\textwidth,trim=80 250 50 200, clip]{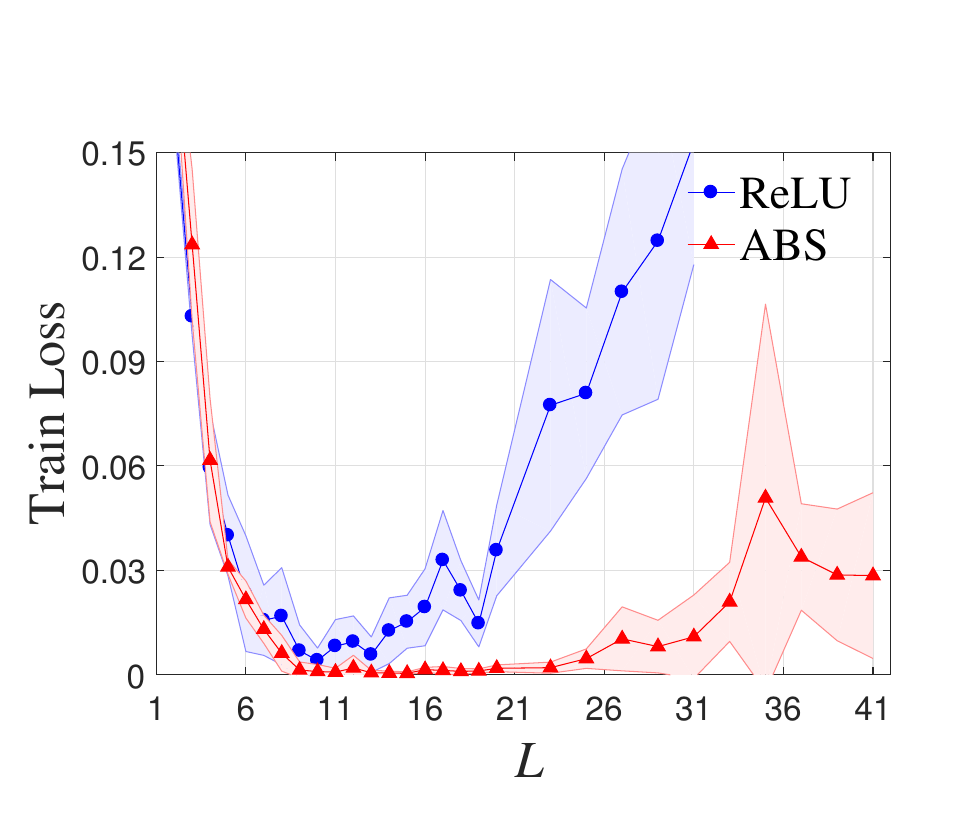}}~~
 	\subfloat[$N_w=3200.$]{
 		\includegraphics[width=.48\textwidth,trim=80 250 50 200, clip]{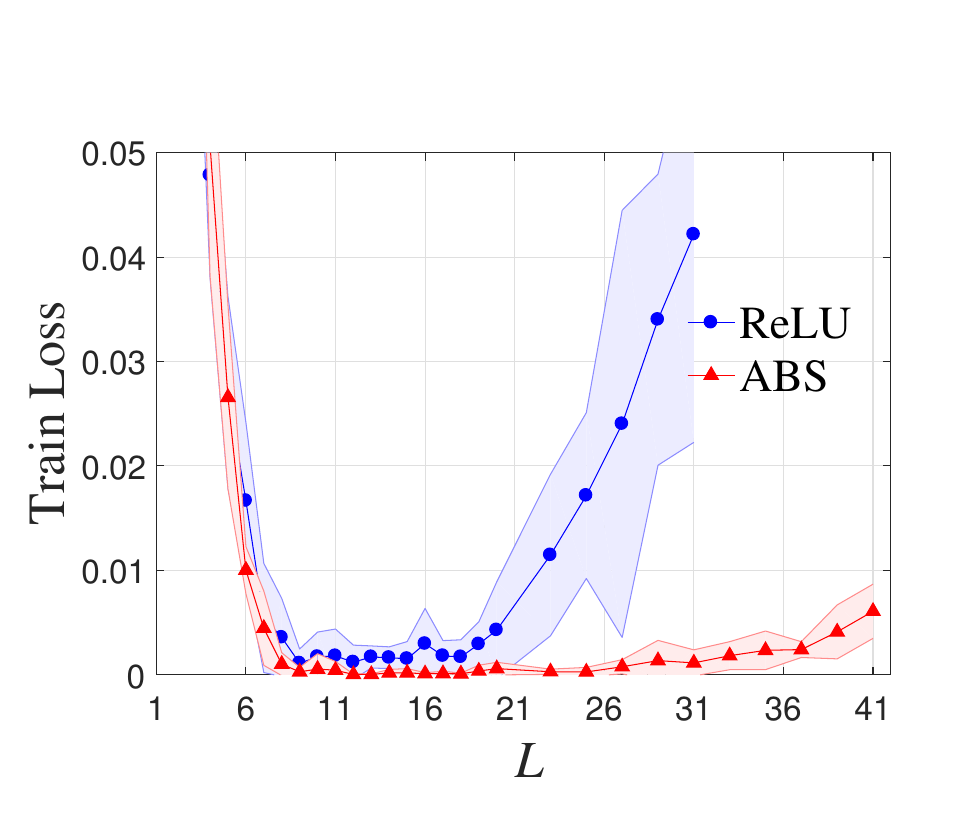}}
 	
 	\caption{Training loss with 1640 samples: ReLU vs ABS for $N_w = 1600$ (left) and $3200$ (right).  
	Each band of curves depicts the mean and variance of 10 random runs.}
 	\label{fig:prob4gd}
 \end{figure*}
We adopt the same MLP models~($N_w=1600, 3200$) used in Figure~\ref{fig:activation-var} with the number of hidden layers $L$ varying from 2 to 31 (with increment 1 up to 20 then increment 2 afterwards) for ReLU and extending to 41 for ABS. We randomly choose $25\%$ data points as the training set with $m=1640$ samples. Denoting the training set by $\{x_i\}_{i=1}^m$, we minimize the least squares loss function,
\begin{equation}\label{min_loss}
    \min_{\bW,\bb}\,%\ell_L(\bW,\bb) \equiv 
    \frac{1}{m}
    \sum_{i=1}^{m} \|F_L(x_i,\bW,\bb)-y_i\|_2^2,
\end{equation}
where each label vector $y_i \in \mathbb{R}^2$ is either $(0,0)^{\T}$ or $(1,1)^{\T}$, representing to the two binary labels. In this set of experiments, we will only examine training loss function values (or train loss, for short), while test loss values are not of concern.

To ensure that the optimization calculation is done sufficiently, we apply the gradient descent method (instead of SGD) with 40000 iterations without a stopping criterion.  For each run, we always try 10 different initial learning rates (step-sizes) as in
$$
\{0.001, ~0.003, ~0.006, ~0.01, ~0.03, ~0.06, ~0.1, ~0.3, ~0.6, ~1.0\}
$$
and then pick the best result for output. During the 40000 iterations, learning rates are reduced by a factor of 5 three times at the junctures corresponding to iterations 20000, 28000, and 36000, respectively. We run each instance with 10 random initial parameter samples and compute the mean value. The result is stable as shown by the std of 10 runs.

\subsection{Computational results}

We show the results of training MLPs of varying depths and parameters ($N_w=1600$ and 3200) in Figure~\ref{fig:prob4gd}. Notably, for Sigmoid, the loss remains around 0.2, irrespective of the model depth ranging from $L=1$ to 10; therefore, we exclude its results from further consideration.  This low trainability associated with Sigmoid activation is evidently explainable by the corresponding low variability, as is shown in Figure~\ref{fig:activation-var}.

Figure~\ref{fig:prob4gd} shows that the variability patterns in Figure~\ref{fig:activation-var} exhibit a striking correlation to the training loss curves in Figure~\ref{fig:prob4gd}.  For ReLU, when $N_w=1600$ and 3200, the peak of variability occurs at $L=12$ and 17 respectively, while the best training performance happens around 11 and 15 respectively. 

For ABS, $V_3$ remains relatively high in the range of $L$ from 8 to 20 and 8 to 30 for $N_w=1600$ and 3200 respectively as Figure~\ref{fig:activation-var} shows. The training loss is relatively close to zero in these two ranges at the same time.

We offer the following interpretations of the experimental results, as pertinent to the relationship between network variability and trainability.
\begin{itemize}

\item Variability in the data space indicates the model's trainability in this experiment. With low variability, models apparently have more local traps, making training difficult.  On the other hand, near or around variability peaks, there appear to exist few or no local traps, as evidenced in Figure~\ref{fig:prob4gd} where the training process seems to reach global optima with few or no exceptions.

\item  ReLU fails to reach near-zero loss values with more than 20 hidden layers, while ABS still succeeds even after the hidden-layer number exceeds 30, confirming that ABS is more effective than ReLU in deeper MLPs.
\end{itemize}

%A well-designed and computable measure of variability, which remains a topic of further research, could serve well as an indicator for experiments with high-dimensional data.

\section{Related Work}

Researchers have extensively studied activation functions as a crucial component of neural network models. The absolute value activation function was sporadically considered in early neural network research, for instance~\cite{batruni1991multilayer, lin1992canonical}, but it has not been widely adopted as a mainstream activation function. Meanwhile, the fact is well-known that the Sigmoid function suffers from gradient-related issues, as is explained in \cite{Goodfellow-et-al-2016}.

Several studies have used the term ``network collapsing" from different perspectives \cite{chen2021exploring, Papyan24652, lu2020dying}. For example, \cite{lu2020dying} studies ``dying ReLU" neural networks that specifically refers to network output collapsing to a constant when ReLU outputs become all zeros. Additionally, \cite{hayou2019impact} observes that if the model is not initialized properly, the outputs will have few variations. 

Some studies mention trainability or related concepts from different perspectives from ours in this paper, such as \cite{xiao2020disentangling,collins2017capacity,sharma2022trainability}.  Analyzing model training inevitably involves gradient exploding and vanishing issues with many studies in this area. For example, \cite{roberts2022principles} has surveyed many articles on this topic.

Many studies, such as \cite{bengio2011expressive,Mhaskar_Liao_Poggio_2017} though too numerous to list even partially, have shown that deep models significantly outperform shallow ones. Our work reports that as the model depth increases, variability first rises and then falls. Meanwhile, practical training performance coincides with the same trend nicely.

A number of existing techniques in deep learning can be interpreted from the viewpoint of enhancing variability of neural networks.  For example, convolutional neural networks (CNN) use far fewer parameters, in comparison to fully connected networks, at each layer, thus greatly increasing activation densities and subsequently variability.  In our view, achieving high activation densities should be considered a significant contributing factor to the great success of CNN.

Since C2C has a close relationship with vanishing gradient, it is not surprising that existing techniques designed to alleviate the latter can also help with the former.  Specifically, since G-matrices are limits of C-matrices, techniques that slow down the size decrease of G-matrices usually also slow down the size decrease of C-matrices.  Such techniques include Residual Networks (or ResNet)~\cite{He_2016_CVPR} and Batch Normalizations \cite{pmlr-v37-ioffe15}.  Another particularly simple technique is to initialize weight matrices by orthogonal matrices \cite{hu2019provable, huang2018orthogonal}.

%People observe that in siamese networks, sometimes all outputs will collapse to a constant due to  similar inputs \cite{chen2020exploring}. \cite{Papyan24652} reports another neural collapse, which begins to happen when training error is close to 0. 

% Expressiveness can be used to explore how the architecture of the model (depth and width)~\cite{HornikApproximation, Cybenko89, NIPS2017_32cbf687, tan2019efficientnet, golubeva2021are} affects performance.

%For example, the authors~\cite{Mhaskar_Liao_Poggio_2017} demonstrate that the number of parameters of deep networks are greatly less than that of shallow ones when approximating compositional functions.

%  Another closely related work is \cite{nguyen2021do}, which measures the similarity of the hidden representations of wide and deep convolutional neural networks. They investigate that when the overall accuracy is similar, wide and deep models still exists distinctive error patterns. However, in our work, we fix the number of parameters to study the effect of width and depth.

\section{Concluding remarks}

In this paper, we introduce a new concept called {\em variability} to understand neural network trainability issues.  In particular, we study multi-layer perceptrons (MLPs) when the number of model parameters is fixed. 
We confirm empirically that variability indeed serves as a key metric for MLP trainability.  It provides explanations for multiple interesting phenomenons related to MLP training, including why deepening depth initially helps training but later hurts it, and why different activation functions behave differently in training.

We studied two quantities to estimate variability: (1) activation ratio which counts a degree of nonlinearity, and (2)  $V_3$ which measures the size of third derivatives.  Despite its simplicity, activation ratio explains well the trend that variability initially increases with the network depth.  The quantity $V_3$ reveals that later on variability starts to decline at the onset of ``Collapse to Constant" (C2C), which is distinct from the well-known vanishing-gradient phenomenon.  Additionally, we find that the absolute-value function is more resistant to C2C than the commonly used ReLU activation function.  Through extensive experiments on a stylized yet non-trivial model problem, we show that variability indeed has a strong positive correlation to MLP trainability.

Finally, we hope that the insight gained from this variability study can become a contributing factor to the considerations of designing new neural network architectures.

\section*{Acknowledgment}

%The preferred spelling of the word ``acknowledgment'' in American English is without an ``e'' after the ``g.'' Use the singular heading even if you have many acknowledgments. Avoid expressions such as ``One of us (S.B.A.) would like to thank ... .'' Instead, write ``F. A. Author thanks ... .'' In most cases, sponsor and financial support acknowledgments are placed in the unnumbered footnote on the first page, not here.

\bibliographystyle{ieeetr}
\bibliography{ref}

\begin{thebibliography}{10}

\bibitem{schoenholz2017deep}
S.~S. Schoenholz, J.~Gilmer, S.~Ganguli, and J.~Sohl-Dickstein, ``Deep
  information propagation,'' in {\em International Conference on Learning
  Representations}, 2017.

\bibitem{jacot2018neural}
A.~Jacot, F.~Gabriel, and C.~Hongler, ``Neural tangent kernel: Convergence and
  generalization in neural networks,'' {\em Advances in neural information
  processing systems}, vol.~31, 2018.

\bibitem{xiao2020disentangling}
L.~Xiao, J.~Pennington, and S.~Schoenholz, ``Disentangling trainability and
  generalization in deep neural networks,'' in {\em International Conference on
  Machine Learning}, pp.~10462--10472, PMLR, 2020.

\bibitem{glorot2010understanding}
X.~Glorot and Y.~Bengio, ``Understanding the difficulty of training deep
  feedforward neural networks,'' in {\em Proceedings of the thirteenth
  international conference on artificial intelligence and statistics},
  pp.~249--256, JMLR Workshop and Conference Proceedings, 2010.

\bibitem{he2015delving}
K.~He, X.~Zhang, S.~Ren, and J.~Sun, ``Delving deep into rectifiers: Surpassing
  human-level performance on imagenet classification,'' in {\em Proceedings of
  the IEEE international conference on computer vision}, pp.~1026--1034, 2015.

\bibitem{batruni1991multilayer}
R.~Batruni, ``A multilayer neural network with piecewise-linear structure and
  back-propagation learning,'' {\em IEEE Transactions on Neural Networks},
  vol.~2, no.~3, pp.~395--403, 1991.

\bibitem{lin1992canonical}
J.-N. Lin and R.~Unbehauen, ``Canonical piecewise-linear approximations,'' {\em
  IEEE Transactions on Circuits and Systems I: Fundamental Theory and
  Applications}, vol.~39, no.~8, pp.~697--699, 1992.

\bibitem{Goodfellow-et-al-2016}
I.~Goodfellow, Y.~Bengio, and A.~Courville, {\em Deep Learning}.
\newblock MIT Press, 2016.
\newblock \url{http://www.deeplearningbook.org}.

\bibitem{chen2021exploring}
X.~Chen and K.~He, ``Exploring simple siamese representation learning,'' in
  {\em Proceedings of the IEEE/CVF conference on computer vision and pattern
  recognition}, pp.~15750--15758, 2021.

\bibitem{Papyan24652}
V.~Papyan, X.~Y. Han, and D.~L. Donoho, ``Prevalence of neural collapse during
  the terminal phase of deep learning training,'' {\em Proceedings of the
  National Academy of Sciences}, vol.~117, no.~40, pp.~24652--24663, 2020.

\bibitem{lu2020dying}
L.~Lu, ``Dying relu and initialization: Theory and numerical examples,'' {\em
  Communications in Computational Physics}, vol.~28, no.~5, pp.~1671--1706,
  2020.

\bibitem{hayou2019impact}
S.~Hayou, A.~Doucet, and J.~Rousseau, ``On the impact of the activation
  function on deep neural networks training,'' in {\em International conference
  on machine learning}, pp.~2672--2680, PMLR, 2019.

\bibitem{collins2017capacity}
J.~Collins, J.~Sohl-Dickstein, and D.~Sussillo, ``Capacity and trainability in
  recurrent neural networks,'' in {\em International Conference on Learning
  Representations}, 2017.

\bibitem{sharma2022trainability}
K.~Sharma, M.~Cerezo, L.~Cincio, and P.~J. Coles, ``Trainability of dissipative
  perceptron-based quantum neural networks,'' {\em Physical Review Letters},
  vol.~128, no.~18, p.~180505, 2022.

\bibitem{roberts2022principles}
D.~A. Roberts, S.~Yaida, and B.~Hanin, {\em The principles of deep learning
  theory}.
\newblock Cambridge University Press Cambridge, MA, USA, 2022.

\bibitem{bengio2011expressive}
Y.~Bengio and O.~Delalleau, ``On the expressive power of deep architectures,''
  in {\em International conference on algorithmic learning theory}, pp.~18--36,
  Springer, 2011.

\bibitem{Mhaskar_Liao_Poggio_2017}
H.~Mhaskar, Q.~Liao, and T.~Poggio, ``When and why are deep networks better
  than shallow ones?,'' {\em Proceedings of the AAAI Conference on Artificial
  Intelligence}, vol.~31, Feb. 2017.

\bibitem{He_2016_CVPR}
K.~He, X.~Zhang, S.~Ren, and J.~Sun, ``Deep residual learning for image
  recognition,'' in {\em Proceedings of the IEEE Conference on Computer Vision
  and Pattern Recognition (CVPR)}, June 2016.

\bibitem{pmlr-v37-ioffe15}
S.~Ioffe and C.~Szegedy, ``Batch normalization: Accelerating deep network
  training by reducing internal covariate shift,'' in {\em Proceedings of the
  32nd International Conference on Machine Learning} (F.~Bach and D.~Blei,
  eds.), vol.~37 of {\em Proceedings of Machine Learning Research}, (Lille,
  France), pp.~448--456, PMLR, 07--09 Jul 2015.

\bibitem{hu2019provable}
W.~Hu, L.~Xiao, and J.~Pennington, ``Provable benefit of orthogonal
  initialization in optimizing deep linear networks,'' in {\em International
  Conference on Learning Representations}, 2019.

\bibitem{huang2018orthogonal}
L.~Huang, X.~Liu, B.~Lang, A.~Yu, Y.~Wang, and B.~Li, ``Orthogonal weight
  normalization: Solution to optimization over multiple dependent stiefel
  manifolds in deep neural networks,'' in {\em Proceedings of the AAAI
  Conference on Artificial Intelligence}, vol.~32, 2018.

\end{thebibliography}

% \begin{thebibliography}{34}
% \setcounter{enumiv}{33}

% \bibitem{}S. Azodolmolky {\em et al.}, ``Experimental demonstration of an impairment aware network planning and operation tool for transparent/translucent optical networks,'' {\em J. Lightw. Technol.}, vol. 29, no. 4, pp. 439--448, Sep. 2011.
% \end{thebibliography}

% \begin{IEEEbiography}{A. Author}{\space}was born in
	
% \end{IEEEbiography}

% \begin{IEEEbiography}{B. Author}{\space}was born in

% \end{IEEEbiography}

\end{document}